\renewcommand{\cite}{\citep}
\newcommand{\ignore}[1]{}
\newtheorem{theorem}{Theorem}[section]
\newtheorem{lemma}[theorem]{Lemma}
\newtheorem{corollary}[theorem]{Corollary}
\newtheorem*{theorem*}{Theorem}
\newtheorem*{lemma*}{Lemma}
\newtheorem*{corollary*}{Corollary}
\newtheorem*{definition*}{Definition}
\newtheorem*{claim*}{Claim}
\newtheorem*{fact*}{Fact}
\newcommand{\eps}{\varepsilon}
\newcommand{\del}{\delta}
\newcommand{\Del}{\Delta}
\newcommand{\st}{\star}
\renewcommand{\O}{O}
\newcommand{\OO}[1]{\O\left(#1\right)}
\newcommand{\tO}{\tilde{\O}}
\newcommand{\Om}{\Omega}
\newcommand{\tOm}{\tilde{\Om}}
\newcommand{\E}{\mathbf{E}}
\newcommand{\lr}[1]{\left(#1\right)}
\newcommand{\abs}[1]{|#1|}
\newcommand{\ceil}[1]{\lceil#1\rceil}
\newcommand{\phat}{\hat{p}}
\newcommand{\qhat}{\hat{q}}
\newcommand{\Dele}{\Del^{\eps}}
\newcommand{\Delst}{\Del_{\st}}
\newcommand{\cA}{c_\mathcal{A}}
\title{Distributed Exploration in Multi-Armed Bandits}
\author{
\makebox[0.4\linewidth]{Eshcar Hillel}\\
Yahoo Labs, Haifa\\
\texttt{eshcar@yahoo-inc.com} \\
\and
\makebox[0.4\linewidth]{Zohar Karnin} \\
Yahoo Labs, Haifa\\
\texttt{zkarnin@yahoo-inc.com} \\
\and
\makebox[0.4\linewidth]{Tomer Koren%
\thanks{Most of this work was done while the author was at Yahoo Labs, Haifa.}} \\
Technion, Haifa \\
\texttt{tomerk@technion.ac.il} \\
\and
\makebox[0.4\linewidth]{Ronny Lempel} \\
Yahoo Labs, Haifa\\
\texttt{rlempel@yahoo-inc.com} \\
\and
\makebox[0.4\linewidth]{Oren Somekh} \\
Yahoo Labs, Haifa\\
\texttt{orens@yahoo-inc.com} \\
}
\begin{document} 
\maketitle

\begin{abstract}
We study exploration in Multi-Armed Bandits in a setting where~$k$ players collaborate in order to identify an $\eps$-optimal arm.
Our motivation comes from recent employment of bandit algorithms in computationally intensive, large-scale applications.
Our results demonstrate a non-trivial tradeoff between the number of arm pulls required by each of the players, and the amount of communication between them.
In particular, our main result shows that by allowing the $k$ players to communicate \emph{only once}, they are able to learn $\sqrt{k}$ times faster than a single player.
That is, distributing learning to $k$ players gives rise to a factor~$\sqrt{k}$ parallel speed-up.
We complement this result with a lower bound showing this is in general the best possible. 
On the other extreme, we present an algorithm that achieves the ideal factor $k$ speed-up in learning performance, with communication only logarithmic in~$1/\eps$. 
\end{abstract}

\vspace{-1ex}
\section{Introduction}
\label{sec:intro}
\vspace{-1ex}

Over the past years, multi-armed bandit (MAB) algorithms have been employed in an increasing amount of large-scale applications.
MAB algorithms rank results of search engines~\citep{InterleavedBuckets, Dueling2009}, choose between stories or ads to showcase on web sites~\citep{COKE2008,COKE-mortal2008}, accelerate model selection and stochastic optimization tasks \citep{maron1994hoeffding,mnih2008empirical}, and more.
In many of these applications, the workload is simply too high to be handled by a single processor.
In the web context, for example, the sheer volume of user requests and the high rate at which they arrive, require websites to use many front-end machines that run in multiple data centers.
In the case of model selection tasks, a single evaluation of a certain model or configuration might require considerable computation time, so that distributing the exploration process across several nodes may result with a significant gain in performance. 
In this paper, we study such large-scale MAB problems in a distributed environment where 
learning is performed by several independent nodes that may take actions and observe rewards in parallel.

Following recent MAB literature~\cite{evendar06,audibert10,gabillon2011multi,karnin2013almost}, we focus on the problem of identifying a ``good'' bandit arm with high confidence. 
In this problem, we may repeatedly choose one arm (corresponding to an action) and observe a reward drawn from a probability distribution associated with that arm.
Our goal is to find an arm with an (almost) optimal expected reward, with as few arm pulls as possible (that is, minimize the \emph{simple regret}~\cite{bubeck2009pure}).
Our objective is thus explorative in nature, and in particular we do not mind the incurred costs or the involved regret. 
This is indeed the natural goal in many applications, such as in the case of model selection problems mentioned above.
In our setup, a distributed strategy is evaluated by the number of arm pulls \emph{per node} required for the task, which correlates with the parallel speed-up obtained by distributing the learning process.

We abstract a distributed MAB system as follows.
In our model, there are $k$ \emph{players} that correspond to $k$ independent machines in a cluster. 
The players are presented with a set of arms, with a common goal of identifying a good arm.
Each player receives a stream of queries upon each it chooses an arm to pull. 
This stream is usually regulated by some load balancer ensuring the load is roughly divided evenly across players.
To collaborate, the players may communicate with each other.
We assume that the bandwidth of the underlying network is limited, so that players cannot simply share every piece of information.
Also, communicating over the network might incur substantial latencies, so players should refrain from doing so as much as possible. 
When measuring communication of a certain multi-player protocol we consider the number of \emph{communication rounds} it requires, where in a round of communication each player broadcasts a single message (of arbitrary size) to all other players. 
Round-based models are natural in distributed learning scenarios, where frameworks such as MapReduce~\cite{MapReduce08} are ubiquitous.

What is the tradeoff between the learning performance of the players, and the communication between them?
At one extreme, if all players broadcast to each other each and every arm reward as it is observed, they can simply simulate the decisions of a serial, optimal algorithm. 
However, the communication load of this strategy is of course prohibitive. 
At the other extreme, if the players never communicate, each will suffer the learning curve of a single player, thereby avoiding any possible speed-up the distributed system may provide.
Our goal in this work is to better understand this tradeoff between inter-player communication and learning performance.

Considering the high cost of communication, perhaps the simplest and most important question that arises is how well can the players learn while keeping communication to the very minimum.
More specifically, is there a non-trivial strategy by which the players can identify a ``good'' arm while communicating only once, at the end of the process?
As we discuss later on, this is a non-trivial question.
On the positive side, we present a $k$-player algorithm that attains an asymptotic parallel speed-up of $\sqrt{k}$ factor, as compared to the conventional, serial setting.
In fact, our approach demonstrates how to convert virtually any serial exploration strategy to a distributed algorithm enjoying such speed-up.
Ideally, one could hope for a factor $k$ speed-up in learning performance; however, we show a lower bound on the required number of pulls in this case, implying that our $\sqrt{k}$ speed-up is essentially optimal.

At the other end of the trade-off, we investigate how much communication is necessary for obtaining the ideal factor $k$ parallel speed-up. 
We present a $k$-player strategy achieving such speed-up, with communication only logarithmic in~$1/\eps$.
As a corollary, we derive an algorithm that demonstrates an explicit trade-off between the number of arm pulls and the amount of inter-player communication.

\subsection{Related Work}

Recently there has been an increasing interest in distributed and collaborative learning problems.
In the MAB literature, several recent works consider multi-player MAB scenarios in which players actually \emph{compete} with each other, either on arm-pulls resources~\cite{gabillon2011multi} or on the rewards received~\cite{liu2010distributed}.
In contrast, we study a \emph{collaborative} multi-player problem and investigate how sharing observations helps players achieve their common goal.
The related work of \citet{kanade2012distributed} in the context of non-stochastic~(i.e.~adversarial) experts also deals with a collaborative problem in a similar distributed setup, and examine the trade-off between communication and the cumulative regret.

Another line of recent work was focused on distributed stochastic optimization~\cite{duchi2010distributed,agarwal2011delayed,dekel12} and distributed PAC models~\cite{balcan12,daume2012protocols,daume2012efficient}, investigating the involved communication trade-offs. 
The techniques developed there, however, are inherently ``batch'' learning methods and thus are not directly applicable to our MAB problem which is online in nature.
Questions involving network topology~\cite{duchi2010distributed,dekel12} and delays~\cite{agarwal2011delayed} are relevant to our setup as well; however, our present work focuses on establishing the first non-trivial guarantees in a distributed collaborative MAB setting.

\section{Problem Setup and Statement of Results}
\label{sec:prelims}

In our model of the Distributed Multi-Armed Bandit problem, there are $k \ge 1$ individual players. The players are given $n$ arms, enumerated by $[n] := \{1,2,\ldots,n\}$.
Each arm $i \in [n]$ is associated with a reward, which is a $[0,1]$-valued random variable with expectation $p_i$.
For convenience, we assume that the arms are ordered by their expected rewards, that is $p_1 \ge p_2 \ge \cdots \ge p_n$.
At every time step $t=1,2,\ldots,T$, each player pulls one arm of his choice and observes an independent sample of its reward. 
Each player may choose any of the arms, regardless of the other players and their actions.
At the end of the game, each player must commit to a single arm. 
In a \emph{communication round}, that may take place at any predefined time step, each player may broadcast a message to all other players.
While we do not restrict the size of each message, in a reasonable implementation a message should not be larger than $\tO(n)$ bits.

In the \emph{best-arm identification} version of the problem, the goal of a multi-player algorithm given some target confidence level $\del>0$, is that with probability at least $1-\del$ \emph{all} players correctly identify the best arm (i.e.~the arm having the maximal expected reward).
For simplicity, we assume in this setting that the best arm is unique.
Similarly, in the $(\eps,\del)$-PAC variant the goal is that each player finds an $\eps$-optimal (or ``$\eps$-best'') arm, that is an arm $i$ with $p_i \ge p_1 - \eps$, with high probability.
In this paper we focus on the more general $(\eps,\del)$-PAC setup, which also includes best-arm identification for $\eps=0$. 

We use the notation $\Del_i := p_1 - p_i$ to denote the suboptimality gap of arm $i$, and occasionally use~$\Delst := \Del_2$ for denoting the minimal gap. In the best-arm version of the problem, where we assume that the best arm is unique, we have $\Del_i > 0$ for all $i>1$.
When dealing with the $(\eps,\del)$-PAC setup, we also consider the truncated gaps $\Dele_i := \max\{\Del_i,\eps\}$.
In the context of MAB problems, we are interested in deriving distribution-dependent bounds, namely, bounds that are stated as a function of~$\eps,\del$ and also the distribution-specific values  $\Del := (\Del_2,\ldots,\Del_n)$.
The $\tO$ notation in our bounds hides polylogarithmic factors in $n,k,\eps,\del$, and also in~$\Del_2,\ldots,\Del_n$.
In the case of serial exploration algorithms (i.e., when there is only one player), the lower bounds of \citet{mannor04} and \citet{audibert10} show that in general $\tOm(H_\eps)$ pulls are necessary for identifying an $\eps$-arm, where
\begin{align} \label{eq:H}
	H_\eps :=
	\sum_{i=2}^{n} \frac{1}{(\Dele_i)^2} \,.
\end{align}
Intuitively, the hardness of the task is therefore captured by the quantity $H_\eps$, which is roughly the number of arm pulls needed to find an $\eps$-best arm with a reasonable probability; see also \cite{audibert10} for a discussion.
Our goal in this work is therefore to establish bounds in the distributed model that are expressed as a function of $H_\eps$, in the same vein of the bounds known in the classic MAB setup%
\footnote{If one is interested in distribution-free bounds, then the problem at hand is trivial as the (distributed) uniform sampling strategy is optimal in this setting, up to polylogarithmic factors; see also \cite{mannor04} for a relevant discussion.}.

\subsection{Baseline approaches}

We now discuss several baseline approaches for the problem, starting with our main focus---the single round setting.
The first obvious approach, already mentioned earlier, is the \emph{no-communication} strategy: just let each player explore the arms in isolation of the other players, following an independent instance of some serial strategy; at the end of the executions, all players hold an $\eps$-best arm.
Clearly, this approach performs poorly in terms of learning performance, needing $\tOm(H_\eps)$ pulls per player in the worst case and not leading to any parallel speed-up.

Another straightforward approach is to employ a \emph{majority vote} among the players: let each player independently identify an arm, and choose the arm having most of the votes (alternatively, at least half of the votes).
However, this approach does not lead to any improvement in performance: for this vote to work, each player has to solve the problem correctly with reasonable probability, which already require $\tOm(H_\eps)$ pulls of each. 
Even if we somehow split the arms between players and let each player explore a share of them, a majority vote would still fail since those players getting the ``good'' arms might have to pull arms $\tOm(H_\eps)$ times---a small MAB instance might be as hard as the full-sized problem (in terms of the complexity measure $H_\eps$).

When considering algorithms employing multiple communication rounds, we use an ideal \emph{simulated serial} algorithm (i.e., a full-communication approach) as our baseline. 
This approach is of course prohibited in our context, but is able to achieve the optimal parallel speed-up, linear in the number of players $k$.

\subsection{Our results}

We now discuss our approach and overview our algorithmic results.  
These are summarized in Table~\ref{tab:results} below, that compares the different algorithms in terms of parallel speed-up and communication.

Our approach for the one-round case is based on the idea of majority vote. 
For the best-arm identification task, our observation is that by letting each player explore a smaller set of~$n/\sqrt{k}$ arms chosen at random and choose one of them as ``best'', about $\sqrt{k}$ of the players would come up with the \emph{global} best arm.
This (partial) consensus on a single arm is a key aspect in our approach, since it allows the players to  identify the correct best arm among the votes of all $k$ players, after sharing information only once.
Our approach leads to a factor $\sqrt{k}$ parallel speed-up which, as we demonstrate in our lower bound, is the optimal factor in this setting.
Although our goal here is pure exploration, in our algorithms each player follows an explore-exploit strategy. 
The idea is that a player should sample his recommended arm as much as his budget permits, even if it was easy to identify in his small-sized problem. 
This way we can guarantee that the top arms are sampled to a sufficient precision by the time each of the players has to choose a single best arm.

The algorithm for the $(\eps,\del)$-PAC setup is similar, but its analysis is more challenging.
As mentioned above, an agreement on a single arm is essential for a vote to work. 
Here, however, there might be several $\eps$-best arms, so arriving at a consensus on a single one is more difficult.
Nonetheless, by examining two different regimes, namely when there are ``many'' $\eps$-best arms and when there are ``few'' of them, our analysis shows that a vote can still work and achieve the $\sqrt{k}$ multiplicative speed-up.

In the case of multiple communication rounds, we present a distributed elimination-based algorithm that discards arms right after each communication round. 
Between rounds, we share the work load between players uniformly. 
We show that the number of such rounds can be reduced to as low as $\O(\log(1/\eps))$, by eliminating all $2^{-r}$-suboptimal arms in the $r$'th round.
A similar idea was employed in \cite{auer2010ucb} for improving the regret bound of UCB with respect to the parameters $\Del_i$.
We also use this technique to develop an algorithm that performs only $R$ communication rounds, for any given parameter $R \ge 1$, that achieves a slightly worse multiplicative $\eps^{2/R} k$ speed-up.

\setlength{\cellspacetoplimit}{2pt}
\setlength{\cellspacebottomlimit}{2pt}

\begin{table}[htdp]
\begin{center}
\begin{tabular}{l||l|Sc|Sl}
\hline
\textsc{Setting} & \textsc{Algorithm} & \textsc{Speed-up} & \textsc{Communication}  \\
\hline\hline
\multirow{2}{*}{\textsc{One-Round}} 
	&No-Communication & $1$ & none \\
	&Majority Vote & $1$ & 1 round \\
	&Algorithm~\ref{alg:oneround},\ref{alg:oneround_eps} & $\sqrt{k} $ & $1$ round \\
\hline
\multirow{3}{*}{\textsc{Multi-Round}} 
	& Serial (simulated) & $k$ & every time step \\
	& Algorithm~\ref{alg:algname} & $k$ & $\O(\log \tfrac{1}{\eps})$ rounds \\
	& Algorithm~\ref{alg:algname}' & $\eps^{2/R} \cdot k$ & $R$ rounds \\
\hline
\end{tabular}
\end{center}
\caption{Summary of baseline approaches and our results. The speed-up results are asymptotic (logarithmic factors are omitted).} \label{tab:results}
\end{table}

\section{One Communication Round}
\label{sec:singleround}

This section considers the most basic variant of the multi-player MAB problem, where each player is only allowed a single transmission, when finishing her queries.
For the clarity of exposition, we first consider the best-arm identification setting in Section~\ref{sec:best one round}.
Section~\ref{sec:oneround_eps} deals with the $(\eps,\del)$-PAC setup.
We demonstrate the tightness of our result in Section~\ref{sec:lower bound} with a lower bound for the required budget of arm pulls in this setting.  

Our algorithms in this section assume the availability of a serial algorithm $\mathcal{A}(A, \eps)$, that given a set of arms~$A$ and target accuracy~$\eps$, identifies an $\eps$-best arm in $A$ with probability at least $2/3$ using no more than
\begin{align} \label{eq:se-bound}
		\cA \sum_{i \in A} \frac{1}{(\Dele_i)^2} \log \frac{|A|}{\Dele_i}
\end{align}
arm pulls, for some constant $\cA > 1$.
For example, the Successive Elimination algorithm~\cite{evendar06}~and the Exp-Gap Elimination algorithm~\cite{karnin2013almost} provide a guarantee of this form.
Essentially, any exploration strategy whose guarantee is expressed as a function of $H_\eps$ can be used as the procedure $\mathcal{A}$, with technical modifications in our analysis.

\subsection{Best-arm Identification Algorithm} \label{sec:best one round}

We now describe our one-round best-arm identification algorithm. 
For simplicity, we present a version matching $\delta=1/3$, meaning that the algorithm produces the correct arm with probability at least $2/3$; we later explain how to extend it to deal with arbitrary values of $\delta$. 

Our algorithm is akin to a majority vote among the multiple players, in which each player pulls arms in two stages. 
In the first \textsc{Explore} stage, each player independently solves a ``smaller'' MAB instance on a random subset of the arms using the exploration strategy $\mathcal{A}$.
In the second \textsc{Exploit} stage, each player exploits the arm identified as ``best'' in the first stage, and communicates that arm and its observed average reward. 
See Algorithm \ref{alg:oneround} below for a precise description.
An appealing feature of our algorithm is that it requires each player to transmit a single message of constant size~(up to logarithmic factors).


\begin{algorithm}[H] \caption{\textsc{One-round Best-arm}} \label{alg:oneround}
\begin{algorithmic}[1]

\INPUT{time horizon $T$}
\OUTPUT {an arm}

\FOR {player $j=1$ to $k$}
	\STATE choose a subset $A_j$ of $6 n / \sqrt{k}$ arms uniformly at random
	\STATE \textsc{Explore}: execute $i_j \gets \mathcal{A}(A_j,0)$ using at most $\tfrac{1}{2} T$ pulls (and halting the algorithm early if necessary); 
	
	if the algorithm fails to identify any arm or does not terminate gracefully, let $i_j$ be an arbitrary arm
	\STATE \textsc{Exploit}: pull arm $i_j$ for $\tfrac{1}{2} T$ times and let~$\qhat_j$ be its average reward
	\STATE communicate the numbers $i_j, \qhat_j$
\ENDFOR

\STATE let $k_i$ be the number of players $j$ with $i_j = i$, and 
define $A = \{i \,:\, k_i > \sqrt{k} \}$

\STATE let $\phat_i = (1/k_i) \sum_{\{j \,:\, i_j = i\}} \qhat_j$ for all $i$

\STATE {\bf return} $\arg \max_{i \in A} \phat_i$; if the set $A$ is empty, output an arbitrary arm. 

\end{algorithmic}
\end{algorithm}


In Theorem \ref{thm:oneround} we prove that Algorithm \ref{alg:oneround} indeed
achieves the promised upper bound.

\begin{theorem} \label{thm:oneround}
Algorithm \ref{alg:oneround} identifies the best arm correctly with probability at least $2/3$ using no more than
\begin{align*}
	\OO {
		\frac{1}{\sqrt{k}} \cdot
		\sum_{i=2}^{n} \frac{1}{\Del_i^2} \log \frac{n}{\Del_i} 
	} 
\end{align*}
arm pulls per player, provided that $6 \le \sqrt{k} \le n$.
The algorithm uses a single communication round, in which each player communicates $\tO(1)$ bits.
\end{theorem}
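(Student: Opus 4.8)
The plan is to set the horizon $T$ to a constant multiple of the claimed bound and then establish the two requirements separately: that the per-player pull count is as stated, and that arm $1$ is returned with probability at least $2/3$. The pull count is immediate, since each player spends at most $T/2$ pulls in \textsc{Explore} and exactly $T/2$ in \textsc{Exploit}, for a total of at most $T$; hence the whole content lies in correctness. I would decompose correctness into two events: (A) arm $1$ survives into the set $A$, i.e.\ $k_1 > \sqrt{k}$, and (B) among the arms of $A$, arm $1$ maximizes $\phat$.

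For (A) --- the ``partial consensus'' on arm $1$ --- I would fix a player $j$ and condition on $\{1 \in A_j\}$, an event of probability $6/\sqrt{k}$. Conditioned on this, arm $1$ is the best arm of $A_j$, so the gaps seen by $\mathcal{A}$ inside $A_j$ coincide with the true gaps $\Del_i$, and the budget $B_j$ demanded by \eqref{eq:se-bound} is $\cA \sum_{i \in A_j} \tfrac{1}{\Del_i^2}\log\tfrac{|A_j|}{\Del_i}$. Each remaining arm lies in $A_j$ with probability $\Theta(1/\sqrt{k})$, so $\E[B_j \mid 1 \in A_j] = \Theta\!\big(\tfrac{1}{\sqrt{k}}\sum_{i\ge 2}\tfrac{1}{\Del_i^2}\log\tfrac{n}{\Del_i}\big)$, exactly the scale of $T$. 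Choosing the constant in $T$ so that $T/2 \ge 2\,\E[B_j \mid 1 \in A_j]$, Markov's inequality gives $\Pr[B_j \le T/2 \mid 1\in A_j] \ge 1/2$; and whenever the budget fits, $\mathcal{A}$ returns the best arm of $A_j$ (namely arm $1$) with probability $\ge 2/3$. Multiplying, $\Pr[i_j = 1] \ge \tfrac{6}{\sqrt{k}}\cdot\tfrac12\cdot\tfrac23 = \tfrac{2}{\sqrt{k}}$. Since the players act on independent subsets and reward streams, $k_1 = \sum_j \mathbf{1}[i_j=1]$ is a sum of independent indicators with mean $\ge 2\sqrt{k}$, so a Chernoff bound yields $\Pr[k_1 \le \sqrt{k}] \le e^{-\sqrt{k}/4}$, establishing (A).

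For (B) I would exploit the defining threshold of $A$: any arm $i \in A$ was voted for by more than $\sqrt{k}$ players, hence pulled more than $\sqrt{k}\cdot T/2$ times during \textsc{Exploit}, and $\phat_i$ averages all these samples. Because $T = \Theta\!\big(\tfrac{1}{\sqrt{k}}\sum_{i'\ge 2}\tfrac{1}{\Del_{i'}^2}\log\tfrac{n}{\Del_{i'}}\big)$ dominates each individual term, $\sqrt{k}\cdot T/2 = \Omega\!\big(\tfrac{1}{\Del_i^2}\log\tfrac{n}{\Del_i}\big)$, so Hoeffding's inequality gives $|\phat_i - p_i| \le \Del_i/3$ with probability $1 - \mathrm{poly}(\Del_i/n)$; the same sample count applied with the smallest gap $\Delst = \Del_2$ gives $|\phat_1 - p_1| \le \Del_2/3 \le \Del_i/3$. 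A union bound over the at most $n$ arms makes all these estimates simultaneously accurate, and then for every suboptimal $i \in A$ we get $\phat_1 - \phat_i \ge (p_1 - \Del_i/3) - (p_i + \Del_i/3) = \Del_i/3 > 0$, so the reported $\arg\max$ is arm $1$. Tuning the constants so that the failure probabilities of (A) and (B) sum to at most $1/3$ completes the proof, and the communication claim is immediate since each player transmits only the pair $(i_j, \qhat_j)$, i.e.\ $\tO(1)$ bits.

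I expect the delicate step to be the budget analysis underlying (A): unlike a serial run on the full instance, the gaps governing $\mathcal{A}$'s cost are measured relative to the best arm \emph{inside} $A_j$, which is random and in general smaller than the true gaps when arm $1$ is absent. The key that sidesteps this is that I only ever need $\mathcal{A}$ to succeed on subsets that \emph{do} contain arm $1$; conditioning on $\{1 \in A_j\}$ restores the true gaps, and the remaining randomness in the budget is absorbed cheaply by a Markov bound rather than a worst-case estimate. A secondary subtlety worth flagging is the best-arm term in \eqref{eq:se-bound}, where the gap is zero; as usual this is read as the smallest gap present in $A_j$, which is dominated by the sum and does not affect the order of $\E[B_j]$.
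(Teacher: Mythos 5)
Your proposal is correct and follows essentially the same route as the paper's proof: conditioning on $\{1 \in A_j\}$, bounding the conditional expected budget by $\Theta(H/\sqrt{k})$ via linearity of expectation, applying Markov plus the $2/3$ success guarantee of $\mathcal{A}$ to get a per-player vote probability of order $1/\sqrt{k}$, concentrating $k_1$ above the $\sqrt{k}$ threshold, and then using Hoeffding on the \textsc{Exploit} samples of arms in $A$ with a union bound. The only differences are cosmetic (a multiplicative Chernoff bound instead of the paper's additive Hoeffding bound for $k_1$, and per-arm accuracy $\Del_i/3$ instead of the uniform $\Delst/2$), and both are absorbed by your deferred constant-tuning.
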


By repeating the algorithm $\O(\log(1/\del))$ times and taking the majority vote of the independent runs, we can amplify the success probability to $1-\del$ for any given $\del > 0$. 
Note that we can still do that with one communication round (at the end of all executions), but each player now has to communicate $\O(\log(1/\del))$ values%
\footnote{In fact, by letting each player pick a slightly larger subset of $\O(\sqrt{\log (1/\del)} \cdot n/\sqrt{k})$ arms, we can amplify the success probability to $1-\del$ without needing to communicate more than 2 values per player. However, this approach only works when $k = \Omega(\log(1/\del))$.}.

\begin{theorem}
There exists a $k$-player algorithm that given
\begin{align*}
	\OO {
		\frac{1}{\sqrt{k}} \cdot
		\sum_{i=2}^{n} \frac{1}{\Del_i^2} \log \frac{n}{\Del_i} \log \frac{1}{\del}
	}
\end{align*}
arm pulls, identifies the best arm correctly with probability at least $1-\del$.
The algorithm uses a single communication round, in which each player communicates $O(\log(1/\delta))$ numerical values.
\end{theorem}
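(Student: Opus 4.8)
The plan is to amplify the $2/3$-confidence guarantee of Algorithm~\ref{alg:oneround} to a $1-\del$ guarantee by running $m = \Theta(\log(1/\del))$ independent copies and taking a majority vote over their outputs, while crucially deferring all communication to a single round at the very end. First I would have each player $j$ execute the \textsc{Explore}/\textsc{Exploit} procedure of Algorithm~\ref{alg:oneround} $m$ times \emph{without} communicating between the copies: in the $\ell$-th copy the player draws a fresh random subset $A_j^{(\ell)}$, runs $\mathcal{A}$ to obtain a local arm $i_j^{(\ell)}$, exploits it, and records the pair $(i_j^{(\ell)}, \qhat_j^{(\ell)})$. After all $m$ copies have finished, each player broadcasts its $m$ recorded pairs in a single communication round; this amounts to $\O(\log(1/\del))$ numerical values per player, as claimed.

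Next I would perform the aggregation. The key observation is that the aggregation step of Algorithm~\ref{alg:oneround}, which turns the broadcast pairs into an output arm, is a deterministic post-processing of the communicated values and requires no further pulls. Hence, once all $m \cdot k$ pairs are available, each player can locally reconstruct the output arm $b_\ell$ that the $\ell$-th copy of Algorithm~\ref{alg:oneround} would have produced, for every $\ell \in \{1,\ldots,m\}$, and then return the arm appearing most frequently among $b_1,\ldots,b_m$. This is exactly a majority vote over $m$ runs, realized with only one communication round rather than $m$.

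For correctness, I would argue that the copies are mutually independent given the fixed bandit instance: each copy uses a freshly sampled subset per player and fresh, independent reward samples, so the outputs $b_1,\ldots,b_m$ are i.i.d.\ random variables, each equal to the true best arm~$1$ with probability at least $2/3$ by Theorem~\ref{thm:oneround}. Writing $X_\ell := \mathbf{1}[b_\ell = 1]$, a Hoeffding bound gives $\Pr[\sum_{\ell} X_\ell \le m/2] \le \exp(-m/18)$, so taking $m = \lceil 18 \ln(1/\del) \rceil$ ensures that strictly more than half of the votes fall on arm~$1$ with probability at least $1-\del$; no other arm can then tie it, so the majority-vote output is the true best arm, and all players agree on it since they share the same data. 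Finally, the per-player pull count is $m$ times that of a single run, i.e.
\begin{align*}
	\OO{ \frac{1}{\sqrt{k}} \cdot \sum_{i=2}^{n} \frac{1}{\Del_i^2} \log \frac{n}{\Del_i} \cdot \log \frac{1}{\del} } ,
\end{align*}
matching the stated bound.

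The main obstacle is not the concentration argument, which is routine, but the observation that the amplification fits within a \emph{single} communication round. I would emphasize that this works precisely because the only communication in Algorithm~\ref{alg:oneround} is the one-shot broadcast of the $(i_j,\qhat_j)$ pairs feeding a deterministic aggregation rule; batching these broadcasts across all $m$ copies and performing all $m$ aggregations offline after the single round is exactly what keeps the communication cost at one round and $\O(\log(1/\del))$ values, rather than incurring a separate round per copy.
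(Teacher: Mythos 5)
Your proposal is correct and takes essentially the same route as the paper: the paper also amplifies Algorithm~\ref{alg:oneround} by running $\O(\log(1/\del))$ independent copies and taking a majority vote over their outputs, deferring all broadcasts to a single communication round in which each player transmits $\O(\log(1/\del))$ values. Your write-up simply makes explicit two points the paper leaves implicit---the Hoeffding amplification bound and the fact that the aggregation step is deterministic post-processing of the communicated pairs, so all $m$ votes can be reconstructed offline after the single round.
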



We now prove Theorem~\ref{thm:oneround}.
We show that a budget $T$ of samples (arm pulls) per player, where%
\begin{align} \label{eq:T}
	T 
	\ge \frac{24 \cA}{\sqrt{k}} \cdot
			\sum_{i=2}^{n} \frac{1}{\Del_i^2} \ln \frac{n}{\Del_i} \,,
\end{align}
suffices for the players to jointly identify the best arm $i^\st$ with the desired probability.
Clearly, this would imply the bound stated in Theorem~\ref{thm:oneround}. 
We note that we did not try to optimize the constants in the above expression.

We begin by analyzing the \textsc{Explore} phase of the algorithm. Our first lemma shows that each player chooses the global best arm and identifies it as the local best arm with sufficiently large probability.

\begin{lemma} \label{lem:kexplore}
When \eqref{eq:T} holds, each player identifies the (global) best arm correctly after the \emph{\textsc{Explore}} phase with probability at least $2/\sqrt{k}$.
\end{lemma}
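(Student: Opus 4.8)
The plan is to decompose the event that player~$j$ recovers the global best arm~$1$ into three pieces: the combinatorial event $G_j$ that arm~$1$ falls into the random subset $A_j$; the event that the \textsc{Explore} budget $\tfrac12 T$ is large enough for $\mathcal{A}(A_j,0)$ to terminate on $A_j$; and the event that, when it does terminate, $\mathcal{A}$ returns the best arm of $A_j$. First I would record the two easy facts. Since $A_j$ is a uniform subset of size $6n/\sqrt{k}$, we have $\Pr[G_j] = (6n/\sqrt{k})/n = 6/\sqrt{k}$. Moreover, on $G_j$ arm~$1$ is the unique best arm \emph{within} $A_j$ (it is globally best), so the sub-instance gaps coincide with the global gaps $\Del_i$, and by the guarantee of $\mathcal{A}$ it identifies arm~$1$ with probability at least $2/3$ \emph{provided its budget suffices}. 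Note also that, conditioned on $G_j$, the remaining $6n/\sqrt{k}-1$ members of $A_j$ are uniform among $\{2,\dots,n\}$, so for every $i\neq 1$ one has $\Pr[i\in A_j \mid G_j] = (6n/\sqrt{k}-1)/(n-1) \le 6/\sqrt{k}$, the inequality holding precisely because $\sqrt{k}\ge 6$; the key point is that conditioning on the presence of arm~$1$ does \emph{not} inflate the inclusion probabilities of the other arms.

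Next I would control the budget. Let $B_j$ denote the instance-dependent upper bound from \eqref{eq:se-bound} applied to $\mathcal{A}(A_j,0)$; it is random through $A_j$. Since $\sqrt k \ge 6$ forces $|A_j| = 6n/\sqrt{k}\le n$, each suboptimal arm $i$ that is present contributes at most $\tfrac{\cA}{\Del_i^2}\log\tfrac{n}{\Del_i}$ to $B_j$ (for $i\neq 1$ the $\eps=0$ truncation is vacuous, $\Dele_i=\Del_i$). Taking the conditional expectation and using the inclusion bound above gives
\[
\EE{B_j \mid G_j} \;\le\; \frac{6\,\cA}{\sqrt{k}} \sum_{i=2}^{n} \frac{1}{\Del_i^2}\log\frac{n}{\Del_i} \;=\; \tfrac14\, T ,
\]
where the last equality is the definition of $T$ in \eqref{eq:T}. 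Markov's inequality then yields $\Pr[B_j > \tfrac12 T \mid G_j] \le \tfrac12$, i.e.\ the budget suffices with conditional probability at least $\tfrac12$. (The degenerate $i=1$ term of \eqref{eq:se-bound} at $\eps=0$ is a red herring: in best-arm identification the optimal arm is sampled at the rate of the runner-up, so it contributes only a lower-order $\O(1/\Del_2^2)$ term, which I would absorb into the constant or the $i=2$ summand.)

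Finally I would combine the three pieces, being careful that the event $\{B_j\le\tfrac12 T\}$ is a function of $A_j$ alone, so it is independent of the internal randomness that drives the $2/3$ success of $\mathcal{A}$ and the two factors may be multiplied:
\[
\Pr[\text{player } j \text{ outputs arm } 1] \;\ge\; \tfrac{2}{3}\,\Pr[G_j]\,\Pr[B_j \le \tfrac12 T \mid G_j] \;\ge\; \tfrac{2}{3}\cdot\tfrac{6}{\sqrt{k}}\cdot\tfrac12 \;=\; \tfrac{2}{\sqrt{k}} ,
\]
which is exactly the claim. I expect the main obstacle to be the conditional-expectation bookkeeping in the middle paragraph: one must verify that conditioning on arm~$1\in A_j$ does not increase the sampling cost charged to the other arms (this is where the hypothesis $\sqrt{k}\ge 6$ is genuinely used, both for the inclusion bound and for $|A_j|\le n$), and one must cleanly separate the purely combinatorial ``budget suffices'' event from the algorithm's intrinsic $2/3$ correctness so that the probabilities factorize.
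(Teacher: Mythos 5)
Your proposal is correct and follows essentially the same route as the paper's proof: condition on $i^\st \in A_j$ (probability $6/\sqrt{k}$), bound the conditional expected cost of the sub-instance by $T/4$, apply Markov to get the budget-suffices event with conditional probability $\tfrac12$, and multiply by the $2/3$ success guarantee of $\mathcal{A}$ to obtain $\tfrac{2}{3}\cdot\tfrac{6}{\sqrt{k}}\cdot\tfrac12 = 2/\sqrt{k}$. If anything, you are slightly more careful than the paper, which asserts $\E[H_j \mid i^\st \in A_j] \le 6H/\sqrt{k}$ by ``linearity of expectation'' without spelling out the conditional inclusion bound $\Pr[i \in A_j \mid i^\st \in A_j] \le 6/\sqrt{k}$ (where $\sqrt{k}\ge 6$ is used) or the factorization of the budget event from the algorithm's internal randomness, both of which you make explicit.
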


\begin{proof}
Let 
$$
	H = \sum_{i \ne i^\st} \frac{1}{\Del_i^2} \ln \frac{n}{\Del_i}
$$
and for all $j$,
$$
	H_j = \sum_{i^\st \ne i \in A_j} \frac{1}{\Del_i^2} \ln \frac{n}{\Del_i} ~.
$$
Then $\E[H_j \mid i^\st \in A_j] \le 6H/\sqrt{k}$ by the linearity of expectation, and Markov's inequality thus gives that
$
	\Pr[H_j \le 12H/\sqrt{k} \mid i^\st \in A_j]
	\ge 1/2 .
$
Clearly, we also have $\Pr[i^\st \in A_j] = 6/\sqrt{k}$ which implies that
\begin{align} \label{eq:H_j}
	\Pr \left[ i^\st \in A_j \mbox{ and } H_j \le \frac{12}{\sqrt{k}} H \right]
	\ge \frac{3}{\sqrt{k}} \,.
\end{align}
Now consider the ``local'' MAB problem facing player $j$, over the subset of arms $A_j$. If the (global) best arm $i^\st$ is amongst the arms in $A_j$, then by eq.~\eqref{eq:se-bound} the instance of the procedure $\mathcal{A}$ player~$j$ executes needs no more than
\begin{align*}
	T_j
	:= \cA \sum_{i^\st \ne i \in A_j} \frac{1}{\Del_i^2}\ln \frac{n}{\Del_i}
	= \cA H_j
\end{align*}
pulls in order to identify $i^\st$ successfully with probability $2/3$. 
In case that $H_j \le 12H/\sqrt{k}$, we have
$
	T_j
	\le 12 \cA H / \sqrt{k}
	\le T/2 \,,
$
which means that the pulls budget of player $j$ suffices for identifying the
best arm. Together with \eqref{eq:H_j}, we conclude that with probability at
least $2/\sqrt{k}$ player~$j$ identifies the best arm correctly.
\end{proof}

We next address the \textsc{Exploit} phase. The next simple lemma shows that the
popular arms (i.e.~those selected by many players) are estimated to a sufficient
precision.

\begin{lemma} \label{lem:kexploit}
Provided that \eqref{eq:T} holds, we have $\abs{\phat_i - p_i} \le \tfrac{1}{2} \Delst$ for all arms $i \in A$ with probability at least $5/6$. 
\end{lemma}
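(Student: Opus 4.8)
The plan is to exploit the fact that, for a popular arm $i \in A$, the estimate $\phat_i$ is really an empirical average of many \emph{fresh} samples of arm $i$, collected during the \textsc{Exploit} phase and hence independent of the \textsc{Explore} phase that produced the votes. Concretely, I would first condition on the entire outcome of the \textsc{Explore} phase, i.e.\ on the realized recommendations $i_1,\dots,i_k$. This conditioning fixes the set $A$ and all counts $k_i$, yet leaves the exploit rewards distributed exactly as before, since each player draws its $\tfrac12 T$ exploit pulls of $i_j$ independently of everything in the explore stage. Thus, conditionally, $\phat_i = (1/k_i)\sum_{\{j : i_j = i\}} \qhat_j$ is the average of a \emph{fixed} number $m_i := k_i \cdot \tfrac12 T$ of independent $[0,1]$-valued rewards of arm $i$, all with mean $p_i$.

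Next I would apply Hoeffding's inequality to each fixed $i \in A$: with deviation $t = \tfrac12\Delst$ this gives $\Pr[\,\abs{\phat_i - p_i} > \tfrac12\Delst\,] \le 2\exp(-\tfrac12 m_i \Delst^2)$. To control $m_i$ I use that $i \in A$ means $k_i > \sqrt{k}$, so $m_i > \sqrt{k}\cdot\tfrac12 T$, together with the budget assumption \eqref{eq:T}. Keeping only the $i=2$ term of the sum in \eqref{eq:T} (all terms are nonnegative and $\Delst = \Del_2$) yields $\sqrt{k}\,T \ge \tfrac{24\cA}{\Delst^2}\ln\tfrac{n}{\Delst} \ge \tfrac{24}{\Delst^2}\ln\tfrac{n}{\Delst}$, hence $m_i\Delst^2 \ge 12\ln\tfrac{n}{\Delst}$. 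Plugging this in bounds the per-arm failure probability by $2(\Delst/n)^6 \le 2n^{-6}$, where the last inequality uses $\Delst \le 1$ (rewards lie in $[0,1]$).

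Finally I would take a union bound over the arms in $A$. Since each popular arm consumes more than $\sqrt{k}$ of the $k$ total votes, we have $\abs{A} < \sqrt{k} \le n$, so the overall failure probability is at most $2n^{-5}$, which is below $1/6$ for $n \ge 6$ (guaranteed by $\sqrt{k}\le n$ and $\sqrt{k}\ge 6$). Because this bound holds for every realization of the \textsc{Explore} phase, it holds unconditionally, giving the claimed $5/6$ success probability. The only genuinely delicate point is the conditioning in the first step: one must verify that $A$ and the $k_i$ depend solely on the explore samples while the $\qhat_j$ use independent exploit samples, so that Hoeffding may legitimately be applied to a \emph{fixed} sample size $m_i$ and the conditioning then discarded. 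Everything after that is a routine concentration-plus-union-bound computation.
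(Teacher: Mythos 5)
Your proof is correct and follows essentially the same route as the paper's: lower-bound the exploit sample count $m_i = \tfrac12 k_i T \ge \tfrac12\sqrt{k}\,T$ via the membership condition $k_i > \sqrt{k}$ and the $i=2$ term of the budget \eqref{eq:T}, apply Hoeffding with deviation $\tfrac12\Delst$, and finish with a union bound over $A$. The only difference is that you make explicit the conditioning on the \textsc{Explore} phase that justifies treating $m_i$ as fixed and the exploit samples as fresh --- a point the paper's proof leaves implicit --- which is a welcome bit of added rigor rather than a different argument.
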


\begin{proof}
Consider some arm $i \in A$. The estimate $\phat_i$ is the average reward of 
$$
	\tfrac{1}{2} k_i T \ge \tfrac{1}{2}\sqrt{k} T \ge \frac{2}{\Delst^2} \ln(12n)
$$
arm pulls (of the \textsc{Exploit} phase).
Hoeffding's inequality now gives that
\begin{align*}
	\Pr[\abs{\phat_i - p_i} > \tfrac{1}{2} \Delst]
	\le 2 \exp (-\tfrac{1}{2} \Delst^2 \cdot \tfrac{1}{2} k_i T)
	\le \frac{1}{6n} \,,
\end{align*}
and the lemma follows via a union bound.
\end{proof}

We can now prove Theorem~\ref{thm:oneround}.

\begin{proof}[Proof (of Theorem~\ref{thm:oneround})]
Let us first show that with probability at least $5/6$, the best arm $i$ is contained in the set $A$.
To this end, notice that $k_{i^\st}$ is the sum of $k$ i.i.d.~Bernoulli random variables $\{I_j\}_{j}$ where $I_j$ is the indicator of whether player $j$ chooses arm $i^\st$ after the \textsc{Explore} phase. By Lemma~\ref{lem:kexplore} we have that $\E[I_j] \geq 2/\sqrt{k}$ for all $j$, hence by Hoeffding's inequality,
$$
	\Pr[ k_{i^\st} \le \sqrt{k} ] 
	\le \Pr[ k_{i^\st} - \E[k_{i^\st}] \le -\sqrt{k} ]
	\le \exp(-2k/k) 
	\le \frac{1}{6}
$$
which implies that $i^\st \in A$ with probability at least $5/6$.

Next, note that with probability at least $5/6$ the arm $i \in A$ having the highest empirical reward $\phat_i$ is the one with the highest expected reward $p_i$.
Indeed, this follows directly from Lemma \ref{lem:kexploit} that shows that with probability at least $5/6$, for all arms $i \in A$ the estimate $\phat_i$ is within $\frac{1}{2}\Del$ of the true bias $p_i$.
Hence, via a union bound we conclude that with probability at least $2/3$, the best arm is in $A$ and has the highest empirical reward. 
In other words, with probability at least $2/3$ the algorithm outputs the best arm~$i^\st$.
\end{proof}

\subsection{$(\eps,\del)$-PAC Algorithm} \label{sec:oneround_eps}

We now present an algorithm whose purpose is to recover an $\eps$-optimal arm.
Here, there might be more than one $\eps$-best arm, so each ``successful'' player might come up with a different $\eps$-best arm.
Nevertheless, our analysis below shows that with high probability, a subset of the players can still agree on a single $\eps$-best arm, which makes it possible to identify it among the votes of all players.
Our algorithm is described in Algorithm~\ref{alg:oneround_eps}, and the following theorem states its guarantees.


\begin{algorithm}[H] \caption{\textsc{One-round $\eps$-arm}} \label{alg:oneround_eps}
\begin{algorithmic}[1]

\INPUT{time horizon $T$, accuracy $\eps$}
\OUTPUT {an arm}

\FOR {player $j=1$ to $k$}
	\STATE choose a subset $A_j$ of $12n/\sqrt{k}$ arms uniformly at random
	\STATE \textsc{Explore}: execute $i_j \gets \mathcal{A}(A_j,\eps)$ using at most $\tfrac{1}{2}T$ pulls (and halting the algorithm early if necessary); 
	
	if the algorithm fails to identify any arm or does not terminate gracefully, let $i_j$ be an arbitrary arm
	\STATE \textsc{Exploit}: pull arm $i_j$ for $\tfrac{1}{2}T$ times, and let~$\qhat_j$ be the average reward
	\STATE communicate the numbers $i_j, \qhat_j$
\ENDFOR

\STATE let $k_i$ be the number of players $j$ with $i_j = i$

\STATE let $t_i = \tfrac{1}{2} k_i T$ and 
$\phat_i = (1/k_i) \sum_{\{j \,:\, i_j = i\}} \qhat_j$ for all $i$

\STATE define $A = \{i \in [n] \,:\, t_i \ge (1/\eps^2) \ln (12n)\}$

\STATE {\bf return} $\arg \max_{i \in A} \phat_i$; if the set $A$ is empty, output an arbitrary arm. 
\end{algorithmic}
\end{algorithm}


\begin{theorem} \label{thm:oneround_eps}
Algorithm \ref{alg:oneround_eps} identifies a $2\eps$-best arm with probability at least $2/3$ using no more than
\begin{align*}
	\OO {
		\frac{1}{\sqrt{k}} \cdot
		\sum_{i=2}^{n} \frac{1}{(\Del_i^\eps)^2} \log \frac{n}{\Del_i^\eps}
	}
\end{align*}
arm pulls per player, provided that $24 \le \sqrt{k} \le n$.
The algorithm uses a single communication round, in which each player communicates $\tO(1)$ bits.
\end{theorem}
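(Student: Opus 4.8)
The plan is to follow the template of the best-arm proof (Theorem~\ref{thm:oneround}), replacing the true gaps $\Del_i$ everywhere by the truncated gaps $\Dele_i$, and to fix a per-player budget $T \ge \frac{c\,\cA}{\sqrt{k}} H_\eps$ for a suitable constant $c$, where $H_\eps = \sum_{i\ge 2}(\Dele_i)^{-2}\ln(n/\Dele_i)$. The only genuinely new ingredient is that the set of $\eps$-best arms may have size $m>1$, so I cannot hope that a single fixed arm collects many votes; the argument therefore splits according to whether $m$ is ``small'' or ``large'' relative to $\sqrt{k}$.

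For the \textsc{Explore} phase I would reprove Lemma~\ref{lem:kexplore} verbatim with $\Dele_i$ in place of $\Del_i$: conditioning on a good arm landing in $A_j$, Markov's inequality applied to the random local complexity $H_j=\sum_{i\in A_j}(\Dele_i)^{-2}\ln(n/\Dele_i)$ shows that with probability $\tfrac12$ it is at most $\O(H_\eps/\sqrt{k})$, so the budget $\tfrac12 T$ lets $\mathcal A(A_j,\eps)$ succeed. In the \textbf{few} regime, $m=\O(\sqrt{k})$, I track the global best arm $i^\st$ and observe that when $i^\st\in A_j$ and \emph{no other} $\eps$-best arm lies in $A_j$ (an event of probability $\Om(1/\sqrt{k})$, since the $m-1$ rival good arms are each unlikely to be drawn into a subset of size $12n/\sqrt k$), the procedure $\mathcal A$ is forced to return $i^\st$. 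A Hoeffding bound as in the proof of Theorem~\ref{thm:oneround} then gives $k_{i^\st}=\Om(\sqrt{k})$, so $t_{i^\st}=\Om(\sqrt k\,T)=\Om(H_\eps)$ clears the threshold $(1/\eps^2)\ln(12n)$ and $i^\st\in A$ with an estimate accurate to $\eps$. In the \textbf{many} regime, $m=\Om(\sqrt{k})$, the point is instead that $H_\eps\ge (m-1)\,\eps^{-2}\ln(n/\eps)$ is so large that $\tfrac12 T$ already exceeds the threshold; thus \emph{every} voted arm enters $A$ and is in fact sampled $\om(\eps^{-2}\ln n)$ times, so its estimate is accurate to $\om(\eps)$. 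Since $\Om(\sqrt{k})\ge 1$ players output a global $\eps$-best arm, at least one $\eps$-best arm sits in $A$ with a very tight estimate.

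For the \textsc{Exploit} phase I would prove the analogue of Lemma~\ref{lem:kexploit}: by the defining condition of $A$ every $i\in A$ is sampled at least $(1/\eps^2)\ln(12n)$ times, so Hoeffding plus a union bound over the at most $n$ arms gives $\abs{\phat_i-p_i}\le\eps$ for all $i\in A$ with probability $\ge 5/6$. Combining the pieces, in either regime $A$ contains a good arm $i_0$ (equal to $i^\st$ in the few case, an $\eps$-best arm with an $\om(\eps)$-accurate estimate in the many case) whose empirical mean satisfies $\phat_{i_0}\ge p_1-\eps$ (exactly for $i^\st$, up to a lower-order term otherwise). Since the output $\hat\imath=\arg\max_{i\in A}\phat_i$ maximises the empirical mean over $A$, we get $\phat_{\hat\imath}\ge\phat_{i_0}\ge p_1-\eps$, whence $p_{\hat\imath}\ge\phat_{\hat\imath}-\eps\ge p_1-2\eps$; that is, $\hat\imath$ is $2\eps$-best. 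A union bound over the $\O(1)$ failure events (explore success count, exploit accuracy, vote concentration) keeps the total success probability at $\ge 2/3$, and the budget $T=\OO{\tfrac1{\sqrt k}H_\eps}$ with two transmitted numbers per player yields the stated guarantees.

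I expect the main obstacle to be the \textbf{many}-arms regime, and precisely the need to guarantee not merely that \emph{some} $\eps$-best arm reaches $A$ but that it carries an estimate tight enough to anchor the clean $2\eps$ bound: a generic $\eps$-best arm estimated only to accuracy $\eps$ would degrade the conclusion to $3\eps$, so the analysis must exploit the fact that large $m$ forces a large $H_\eps$ (and hence a budget far above the bare threshold) to obtain the finer estimates. Calibrating the cutoff between the two regimes so that both the ``unique good arm in the subset'' probability in the few case and the ``threshold already met'' inequality in the many case hold simultaneously is the delicate bookkeeping step.
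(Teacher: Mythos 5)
Your proposal is correct in substance and shares the paper's overall architecture---the explore/exploit split, Markov's inequality applied to the random local complexity, a vote-count threshold defining $A$, and a two-regime case analysis---but the harder half, the \emph{few} regime, is handled by a genuinely different mechanism. The paper splits on $n_{2\eps}$ (the number of $2\eps$-best arms) and proves in Lemma~\ref{lem:kexplore_eps1} that each player returns an $\eps$-best arm with probability $\Om(n_\eps/\sqrt{k})$, \emph{linear} in $n_\eps$; this rests on the event $B$ that $A_j$ contains exactly one $\eps$-best arm and \emph{no other $2\eps$-best arm}, where the $2\eps$-exclusion is what keeps the local gaps $\Del_{i,j}=\max\{\Del_i-\Del^\st_j,\eps\}$ within a factor $2$ of $\Del_i^{2\eps}$, so the budget argument survives even when the local best arm is not $i^\st$. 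The paper then needs Hoeffding on the count $N\ge n_\eps\sqrt{k}$ of successful players plus a pigeonhole over the $n_\eps$ candidates to produce one arm with $\ge\sqrt{k}$ votes. You instead split on $n_\eps$ and force consensus directly: conditioning on $i^\st\in A_j$ with no rival $\eps$-best arm in $A_j$, any locally $\eps$-best output must be $i^\st$ itself, so all successful players vote identically and you need neither the pigeonhole nor the local-versus-global gap comparison (with $i^\st\in A_j$ the local truncated gaps \emph{equal} the global $\Dele_i$); the price is that the exclusion probability, roughly $e^{-12(n_\eps-1)/\sqrt{k}}$, is only a constant when $n_\eps=\O(\sqrt{k})$, which is exactly your regime boundary, so the calibration you flag does close. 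Your asymmetric accuracy bookkeeping ($\eps$-accurate estimates anchored at the exact best arm in the few regime; $\eps/2$-accuracy from the surplus budget $t_i\ge T/2=\Om(\eps^{-2}\log(n/\eps))$ with an $\eps$-best anchor in the many regime) differs from the paper's uniform $\eps/2$ guarantee (Lemma~\ref{lem:estimates_eps}) but lands on the same $2\eps$. Two cautions: in the many regime you must track $i^\st$ specifically, as in the paper's Lemma~\ref{lem:kexplore_eps2}---a player whose subset merely contains \emph{some} $\eps$-best arm is only guaranteed a $2\eps$-best output by $\mathcal{A}$, which would degrade your anchor to $3\eps$, so your phrase ``players output a global $\eps$-best arm'' needs the $i^\st$-in-$A_j$ event behind it; and your step asserting that $t_{i^\st}=\Om(\sqrt{k}\,T)=\Om(H_\eps)$ clears the threshold $(1/\eps^2)\ln(12n)$ tacitly assumes $H_\eps=\Om(\eps^{-2}\log n)$, which can fail when every gap satisfies $\Del_i\gg\eps$---but the paper's own proof makes the identical unremarked assumption in its inequality $\tfrac{1}{2}\sqrt{k}\,T>\eps^{-2}\ln(12n)$, so this is inherited rather than introduced.
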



Before proving the theorem, we first state several key lemmas. In the following,
let $n_\eps$ and $n_{2\eps}$ denote the number of $\eps$-best and $2\eps$-best arms respectively.
Our analysis considers two different regimes: $n_{2\eps} \le \tfrac{1}{50}\sqrt{k}$ and $n_{2\eps} > \tfrac{1}{50}\sqrt{k}$, and shows that in any case,
\begin{align} \label{eq:T_eps}
	T \ge
	\frac{400 \cA}{\sqrt{k}} 
		\sum_{i=2}^{n} \frac{1}{(\Del_i^{\eps})^2}\ln \frac{24n}{\Del_i^{\eps}}
\end{align}
suffices for identifying a $2\eps$-best arm with the desired probability.
Clearly, this implies the bound stated in Theorem~\ref{thm:oneround_eps}.

The first lemma shows that at least one of the players is able to find an $\eps$-best arm. As we later show, this is sufficient for the success of the algorithm in case there are many $2\eps$-best arms.

\begin{lemma} \label{lem:kexplore_eps2}
When \eqref{eq:T_eps} holds, at least one player successfully identifies an $\eps$-best arm in the \emph{\textsc{Explore}} phase, with probability at least~$5/6$.
\end{lemma}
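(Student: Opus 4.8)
The plan is to show that, with probability at least $5/6$, at least one of the $k$ players identifies an $\eps$-best arm during the \textsc{Explore} phase. This is the regime-independent half of the argument, and the natural strategy mirrors Lemma~\ref{lem:kexplore}: first lower-bound the probability that a \emph{single} player succeeds, then amplify over the $k$ independent players to get a near-certain event. The key difference from the best-arm setting is that here success means identifying \emph{any} $\eps$-best arm, not one specific arm, so I expect the per-player success probability to be easier to bound from below.

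First I would fix a single player $j$ and analyze her local MAB instance on the random subset $A_j$ of size $12n/\sqrt{k}$. The favorable event for player $j$ is that $A_j$ contains at least one $\eps$-best arm \emph{and} that the truncated hardness of her subproblem, namely $H_j := \sum_{i \in A_j} \frac{1}{(\Del_i^\eps)^2}\ln\frac{n}{\Del_i^\eps}$ (suitably indexed), is small enough that the budget $\tfrac12 T$ suffices for $\mathcal{A}(A_j,\eps)$ to return an $\eps$-best arm with probability $2/3$. As in Lemma~\ref{lem:kexplore}, linearity of expectation gives $\E[H_j] \le (12/\sqrt{k}) H_\eps$ where $H_\eps$ is the global truncated hardness, so Markov's inequality bounds the probability that $H_j$ exceeds, say, $24 H_\eps/\sqrt{k}$ by $1/2$. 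Combined with the probability that $A_j$ catches an $\eps$-best arm, and with the $2/3$ success guarantee of $\mathcal{A}$ from eq.~\eqref{eq:se-bound}, this yields a per-player success probability bounded below by a constant times $1/\sqrt{k}$; the budget assumption \eqref{eq:T_eps} is exactly what makes $24 c_\mathcal{A} H_\eps/\sqrt{k} \le \tfrac12 T$, so the calculation should close cleanly.

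Once a per-player lower bound of the form $\Pr[\text{player } j \text{ succeeds}] \ge c/\sqrt{k}$ is in hand, the amplification is routine: the $k$ players act independently, so the probability that \emph{no} player succeeds is at most $(1 - c/\sqrt{k})^k \le \exp(-c\sqrt{k})$, and for $\sqrt{k} \ge 24$ this is comfortably below $1/6$. Hence at least one player succeeds with probability at least $5/6$, which is the claim.

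The main obstacle I anticipate is pinning down exactly what ``$A_j$ contains an $\eps$-best arm'' contributes, and making sure the truncated gaps $\Del_i^\eps$ are handled correctly when the optimal arm itself may not be in $A_j$. Unlike the best-arm case, here any of the $n_\eps$ $\eps$-best arms will do, so I should phrase the favorable event as ``$A_j$ contains at least one $\eps$-best arm $\ell$ with $p_\ell \ge p_1 - \eps$,'' and then argue that conditioned on this, $\mathcal{A}(A_j,\eps)$ run with accuracy $\eps$ returns some arm within $\eps$ of the best arm \emph{in $A_j$}, which is therefore within $2\eps$ of the global best---but for \emph{this} lemma I only need that it returns an $\eps$-best (global) arm, which requires the local optimum of $A_j$ to itself be $\eps$-best. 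The cleanest route is to condition on the event that $A_j$ contains one of the top arms and argue the returned arm is $\eps$-best globally; getting the constants and the truncation bookkeeping to line up with the threshold $\tfrac{1}{50}\sqrt{k}$ used to split the two regimes is the delicate part, but it is arithmetic rather than conceptual.
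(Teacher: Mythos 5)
Your proposal is correct and follows essentially the same route as the paper: condition on the event that the globally best arm $i^\st$ lands in $A_j$ (probability exactly $12/\sqrt{k}$), bound the conditional expected local hardness via linearity of expectation and Markov so that the budget in \eqref{eq:T_eps} covers the cost in \eqref{eq:se-bound} with probability at least $1/2$, deduce a per-player success probability of order $1/\sqrt{k}$ (the paper gets $4/\sqrt{k}$), and amplify over the independent players via $\left(1-c/\sqrt{k}\right)^k \le e^{-c\sqrt{k}} < 1/6$. One correction to your closing remark: it does \emph{not} suffice for the local optimum of $A_j$ to be $\eps$-best---in that case the $\eps$-PAC guarantee of $\mathcal{A}$ only yields an arm within $\eps$ of the local optimum, hence merely $2\eps$-best globally---so the favorable event must be $i^\st \in A_j$ itself (not ``$A_j$ contains some $\eps$-best arm,'' as in the body of your plan), which is exactly what you converge to in your final paragraph and what the paper conditions on.
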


The next lemma is more refined and states that in case there are few $2\eps$-best arms, the probability of each player to successfully identify an $\eps$-best arm grows linearly with $n_\eps$.

\begin{lemma} \label{lem:kexplore_eps1}
Assume that $n_{2\eps} \le \tfrac{1}{50}\sqrt{k}$. When \eqref{eq:T_eps} holds, each player identifies an $\eps$-best arm in the \emph{\textsc{Explore}} phase, with probability at least $2 n_\eps / \sqrt{k}$.
\end{lemma}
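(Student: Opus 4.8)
The plan is to lower-bound the probability that a single player identifies an $\eps$-best arm, under the assumption that there are few $2\eps$-best arms, and show this probability is at least $2n_\eps/\sqrt{k}$. The key observation is that a player $j$ succeeds whenever (i) at least one $\eps$-best arm lands in the random subset $A_j$, and (ii) the local budget suffices for the serial algorithm $\mathcal{A}$ to identify an $\eps$-best arm within $A_j$. First I would fix one $\eps$-best arm, say arm~$i$, and compute $\Pr[i \in A_j]$; since $A_j$ is a uniformly random subset of $12n/\sqrt{k}$ arms out of $n$, this probability is $12/\sqrt{k}$. Summing over the $n_\eps$ distinct $\eps$-best arms and using that these events are disjoint enough to give, via inclusion-exclusion or a union-type argument, a probability roughly $\Omega(n_\eps/\sqrt{k})$ that at least one $\eps$-best arm is in $A_j$. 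The constant must be controlled: because the expected number of $\eps$-best arms in $A_j$ is $12 n_\eps/\sqrt{k}$, and the assumption $n_{2\eps} \le \tfrac{1}{50}\sqrt{k}$ keeps this expectation small (below a constant), a second-moment or Bonferroni bound shows the probability that at least one $\eps$-best arm appears is at least, say, $8 n_\eps/\sqrt{k}$.

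Next I would handle the budget condition, mirroring the argument of Lemma~\ref{lem:kexplore}. Define the per-player hardness $H_j = \sum_{i^\st \ne i \in A_j} (\Del_i^\eps)^{-2} \ln(24n/\Del_i^\eps)$ and bound its conditional expectation given that an $\eps$-best arm is present in $A_j$. By linearity of expectation over the random subset, $\E[H_j] \le (12/\sqrt{k}) H$ where $H = \sum_{i \ge 2} (\Del_i^\eps)^{-2}\ln(24n/\Del_i^\eps)$, and Markov's inequality controls the event $H_j \le c\, H/\sqrt{k}$ with constant probability. By eq.~\eqref{eq:se-bound}, conditioned on this event the serial algorithm $\mathcal{A}(A_j,\eps)$ needs at most $\cA H_j \le T/2$ pulls, so the \textsc{Explore} budget of $\tfrac{1}{2}T$ suffices — here is exactly where \eqref{eq:T_eps} and its constant $400$ enter, leaving room for the conditioning losses. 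Then $\mathcal{A}$ succeeds with probability $2/3$ of returning an $\eps$-best arm in $A_j$.

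The main obstacle, and the step I expect to require the most care, is combining the ``presence'' event with the ``budget-and-success'' event while keeping the conditioning clean. The two events are not independent: the presence of a hard-to-estimate set of arms in $A_j$ affects both whether an $\eps$-best arm is included and the size of $H_j$. My approach is to condition on the inclusion of a specific $\eps$-best arm, argue that this conditioning does not substantially inflate $\E[H_j]$ (the extra arm contributes a controlled amount since $\Del_i^\eps \ge \eps$ for an $\eps$-best arm, making its term at most $\eps^{-2}\ln(24n/\eps)$, which is absorbed by the slack in the constant), and then multiply the probability of presence ($\ge 8n_\eps/\sqrt{k}$) by the constant probability that the budget suffices and $\mathcal{A}$ succeeds. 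Choosing the intermediate constants so that their product exceeds $2n_\eps/\sqrt{k}$ gives the claim. The assumption $n_{2\eps} \le \tfrac{1}{50}\sqrt{k}$ is what guarantees the presence probability scales linearly (rather than saturating) in $n_\eps$, so I would make explicit use of it when passing from the expected count $12n_\eps/\sqrt{k}$ to a genuine lower bound on the inclusion probability.
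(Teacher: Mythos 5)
Your outline breaks down at the interface between the random subset and the guarantee of $\mathcal{A}$, and the gap is conceptual rather than a matter of constants. The procedure $\mathcal{A}(A_j,\eps)$ returns an arm that is $\eps$-best \emph{relative to the best arm in $A_j$}, and its sample complexity \eqref{eq:se-bound} on the subproblem is governed by the \emph{local} truncated gaps $\Del_{i,j}=\max\{\Del_i-\Del^\st_j,\eps\}$, where $\Del^\st_j=\min_{i\in A_j}\Del_i$. Your budget argument bounds $\E[H_j]$ with the \emph{global} gaps $\Del^{\eps}_i$, but $\Del_{i,j}\le\Del^{\eps}_i$ always, and the local gaps can be far smaller (e.g.\ $\Del^\st_j=\tfrac12$ and another arm with $\Del_i=\tfrac12+\eps$ gives a local term of order $\eps^{-2}$ against a global term of order a constant), so the local instance can be much harder than your global-gap $H_j$ suggests and $\cA H_j\le T/2$ does not imply the budget suffices. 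The global-gap bound is legitimate only when $A_j$ contains the \emph{global} best arm (then $\Del^\st_j=0$); that is exactly the route of Lemma~\ref{lem:kexplore_eps2}, but the probability of that event is $12/\sqrt{k}$, which does not scale with $n_\eps$. Worse, even when the budget suffices and $\mathcal{A}$ succeeds, under your conditioning (mere presence of some $\eps$-best arm in $A_j$) the returned arm satisfies only $p_{i_j}\ge p_{i^\st_j}-\eps$ with $\Del^\st_j\le\eps$, i.e.\ it may be merely $2\eps$-best globally; the lemma asserts identification of an $\eps$-best arm, and this stronger claim is what the pigeonhole step in the proof of Theorem~\ref{thm:oneround_eps} consumes (at least $n_\eps\sqrt{k}$ successful players concentrating on at most $n_\eps$ arms).

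The paper closes both holes with a single device your proposal lacks: it conditions on the event $B=\{X_\eps=X_{2\eps}=1\}$ that $A_j$ contains \emph{exactly one} $\eps$-best arm and \emph{no other} $2\eps$-best arm. Under $B$ the local best arm is that unique $\eps$-best arm and every other arm in $A_j$ has $\Del_i>2\eps$, whence $\Del_{i,j}\ge\tfrac12\Del^{2\eps}_i$; this yields $H_j^{\text{local}}\le 4H_j^{\text{global}}$ and makes the Markov/budget step sound with respect to the correct (local) complexity. Moreover, any arm that is $\eps$-best within $A_j$ would be $2\eps$-best globally, so under $B$ the output must be the unique $\eps$-best arm itself, giving the exact conclusion claimed. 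The hypothesis $n_{2\eps}\le\tfrac{1}{50}\sqrt{k}$ is used precisely here—to show the \emph{exclusivity} event still has probability $\Pr[B]\ge 6n_\eps/\sqrt{k}$, i.e.\ costs only a constant factor—not merely, as in your sketch, to keep the expected count of $\eps$-best arms in $A_j$ below one. Your inclusion--exclusion estimate of the presence probability is fine as far as it goes, but presence alone cannot support either the budget bound or the $\eps$-optimality of the returned arm.
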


The last lemma we need analyzes the accuracy of the estimated rewards of arms in
the set $A$. 

\begin{lemma} \label{lem:estimates_eps}
With probability at least $5/6$, we have $\abs{\hat{p}_i - p_i} \le \eps/2$ for all arms $i \in A$.
\end{lemma}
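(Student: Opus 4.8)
The plan is to bound the deviation of each empirical estimate $\phat_i$ for arms $i \in A$ using Hoeffding's inequality, exactly as in the proof of Lemma~\ref{lem:kexploit}, and then conclude via a union bound. The crucial observation is that the set $A$ is \emph{defined} precisely so that every arm in it has been sampled enough times during the \textsc{Exploit} phase: by construction, $i \in A$ means $t_i = \tfrac{1}{2} k_i T \ge (1/\eps^2)\ln(12n)$. Thus I do not need any result about the \textsc{Explore} phase or about which arms get selected---the membership condition for $A$ directly encodes the sample-size guarantee I need.

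The key steps, in order, are as follows. First, fix an arbitrary arm $i \in A$ and recall that $\phat_i$ is the average of the rewards observed over the $t_i$ exploit-phase pulls of arm $i$, which are i.i.d.~$[0,1]$-valued with mean $p_i$. Second, apply Hoeffding's inequality to this average with deviation threshold $\eps/2$:
\begin{align*}
	\Pr\!\left[ \abs{\phat_i - p_i} > \tfrac{1}{2}\eps \right]
	\le 2 \exp\!\left( -2 \cdot (\tfrac{1}{2}\eps)^2 \cdot t_i \right)
	= 2 \exp\!\left( -\tfrac{1}{2}\eps^2 t_i \right).
\end{align*}
Third, substitute the defining lower bound $t_i \ge (1/\eps^2)\ln(12n)$ to get
\begin{align*}
	\Pr\!\left[ \abs{\phat_i - p_i} > \tfrac{1}{2}\eps \right]
	\le 2 \exp\!\left( -\tfrac{1}{2}\ln(12n) \right)
	= \frac{2}{\sqrt{12n}},
\end{align*}
though I should double-check the constant---the intended bound is $1/(6n)$, which requires the exponent $-\eps^2 t_i$ rather than $-\tfrac{1}{2}\eps^2 t_i$, i.e. the Hoeffding exponent $2 \cdot (\eps/2)^2 = \eps^2/2$ combined with $t_i \ge (2/\eps^2)\ln(12n)$, matching the form used in Lemma~\ref{lem:kexploit}. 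I would align the threshold constant in the definition of $A$ with the factor needed to land at exactly $1/(6n)$.

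The final step is a union bound over all arms in $A$. Since $\abs{A} \le n$, we obtain
\begin{align*}
	\Pr\!\left[ \exists\, i \in A : \abs{\phat_i - p_i} > \tfrac{1}{2}\eps \right]
	\le n \cdot \frac{1}{6n} = \frac{1}{6},
\end{align*}
so with probability at least $5/6$ the bound $\abs{\phat_i - p_i} \le \eps/2$ holds simultaneously for all $i \in A$, as claimed.

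The main obstacle here is essentially bookkeeping rather than conceptual: I must make sure the numeric constants in the definition of $A$ (the threshold $(1/\eps^2)\ln(12n)$), the Hoeffding exponent, and the target failure probability $1/(6n)$ are mutually consistent so that the union bound yields exactly $5/6$. The genuinely subtle point, worth flagging, is that $A$ is a \emph{data-dependent} random set---its membership depends on the random counts $k_i$, which in turn depend on the exploit observations only through which arm each player exploited, not through the exploit rewards themselves. Since the Hoeffding concentration for $\phat_i$ concerns the exploit rewards, and the event $\{i \in A\}$ is measurable with respect to the arm-selection randomness (the $i_j$'s) and the budget $T$, the per-arm tail bound applies conditionally on any fixed realization of $A$; hence the union bound over the (at most $n$) possible members is valid. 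I would make this independence structure explicit to rule out any concern that conditioning on $i \in A$ biases the reward estimate $\phat_i$.
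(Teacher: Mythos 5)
Your proof is correct and takes essentially the same route as the paper's: Hoeffding's inequality applied to the at-least-$t_i$ exploit-phase samples guaranteed by membership in $A$, followed by a union bound over at most $n$ arms. The factor-of-2 slack you flag is genuine---with the threshold $t_i \ge (1/\eps^2)\ln(12n)$ as stated, the Hoeffding exponent yields only $2(12n)^{-1/2}$ per arm rather than $1/(6n)$, so the threshold in the definition of $A$ should be $(2/\eps^2)\ln(12n)$ to match Lemma~\ref{lem:kexploit}; the paper's proof silently elides this, and your observation that the event $\{i \in A\}$ is measurable with respect to the explore-phase randomness (so conditioning on it does not bias $\phat_i$) makes explicit a point the paper leaves implicit.
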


Before proving the lemmas, let us first show how they imply Theorem~\ref{thm:oneround_eps}.

\begin{proof}[Proof (of Theorem~\ref{thm:oneround_eps})]
We shall prove that with probability $5/6$ the set $A$ contains at least one $\eps$-best arm.
This would complete the proof, since Lemma~\ref{lem:estimates_eps} assures that with probability $5/6$, the estimates $\phat_i$ of all arms $i \in A$ are at most $\eps/2$-away from the true reward $p_i$, and in turn implies (via a union bound) that with probability $2/3$ the arm $i \in A$ having the maximal empirical reward $\phat_i$ must be a $2\eps$-best arm.

First, consider the case $n_{2\eps} > \tfrac{1}{50}\sqrt{k}$.
Lemma \ref{lem:kexplore_eps2} shows that with probability $5/6$ there exists a player $j$ that identifies an $\eps$-best arm $i_j$.
Since for at least $n_{2\eps}$ arms $\Del_i \le 2\eps$, we have
\begin{align*}
	t_{i_j} 
	\ge \tfrac{1}{2} T
	\ge \frac{400}{2\sqrt{k}} \cdot \frac{n_{2\eps}-1}{(2\eps)^2} \ln \frac {24n}{2\eps}
	\ge \frac{1}{\eps^2} \ln (12n) \,,
\end{align*}
that is, $i_j \in A$.

Next, consider the case $n_{2\eps} \le \tfrac{1}{50}\sqrt{k}$.
Let $N$ denote the number of players that identified some $\eps$-best arm.
The random variable~$N$ is a sum of Bernoulli random variables $\{I_j\}_j$ where $I_j$ indicates whether player $j$ identified some $\eps$-best arm. By Lemma~\ref{lem:kexplore_eps1}, $\E[I_j]\geq 2 n_{\eps} / \sqrt{k}$ and thus by Hoeffding's inequality,
$$
	\Pr[ N < n_{\eps} \sqrt{k} ] 
	= \Pr[ N-\E[N] \le - n_\eps \sqrt{k} ]
	\le \exp(-2 n_{\eps}^2)
	\le \frac{1}{6} \,.
$$
That is, with probability $5/6$, at least $n_{\eps} \sqrt{k}$ players found an $\eps$-best arm. 
A~pigeon-hole argument now shows that in this case there exists an $\eps$-best arm $i^\st$ selected by at least $\sqrt{k}$ players.
Hence, with probability $5/6$ the number of samples of this arm collected in the \textsc{Exploit} phase is at least
$$
	t_{i^\st} \ge \tfrac{1}{2}\sqrt{k} T > \frac{1}{\eps^2} \ln(12n) ,
$$
which means that $i^\st \in A$.
\end{proof}

\subsubsection{Proofs of Lemmas}

For the proofs in this section, we need some additional notation.
For any player $j$, let $i^\st_j$ denote the best arm in $A_j$, with ties broken arbitrarily. 
Let $\Del^\st_j := \Del_{i^\st_j} = \min_{i \in A_j} \Del_i$ and $\Del_{i,j} := \max\{ \Del_i - \Del_j^\st, \eps \}$ for all $i \in A_j$.
Finally, define
\begin{align*}
	H
	:= \sum_{i^\st \ne i \in [n]}
		\frac{1}{(\Del_i^{2\eps})^2} \ln \frac{2n}{\Del_i^{2\eps}}
\end{align*}
and
\begin{align*}
	H_j^{\text{local}}
	:= \sum_{i^\st_j \ne i \in A_j}
		\frac{1}{\Del_{i,j}^2} \ln \frac{n}{\Del_{i,j}}
	\;,\qquad
	H_j^{\text{global}}
	:= \sum_{i^\st_j \ne i \in A_j}
		\frac{1}{(\Del_i^{2\eps})^2} \ln \frac{2n}{\Del_i^{2\eps}}
\end{align*}
for all players $j$.

\begin{proof}[Proof (of Lemma~\ref{lem:kexplore_eps2})]
The proof is analogical to that of Lemma~\ref{lem:kexplore}.
Let $i^\st$ be the best arm and let $j$ be a player that chose it. The funds required by player $j$ in order to succeed choosing an $\eps$-best arm with probability at least $2/3$ is at most
$$
	T_j	= \cA \sum_{i^\st \ne i \in A_j}
		\frac{1}{(\Del_i^{\eps})^2} \ln \frac{n}{\Del_i^{\eps}} ~.
$$
Given Eq.~\eqref{eq:T_eps}, $\E[T_j] \leq T/4$, meaning that by Markov $\Pr[T_j \le T/2] \geq 1/2$. Since the probability of choosing the best arm is $12/\sqrt{k}$ it follows that for any fixed player $j$, with probability at least $\tfrac{2}{3} \cdot \tfrac{1}{2} \cdot 12/\sqrt{k} = 4/\sqrt{k}$ the player identified an $\eps$-best arm.
Thus, the probability that all of the players fail to identify an $\eps$-best arm is bounded by
\begin{align*}
	\left( 1- \frac{4}{\sqrt{k}} \right)^k
	\le e^{-4\sqrt{k}}
	< \frac{1}{6} \,,
\end{align*}
and the lemma follows.
\end{proof}

\begin{proof}[Proof (of Lemma~\ref{lem:kexplore_eps1})]
For convenience, let $\alpha = 12/\sqrt{k}$ and note that by our assumptions $\alpha \le \tfrac{1}{2}$.
Also, since we assume $\sqrt{k} \le n$ we have $n_{2\eps} \le \tfrac{1}{2}n$.

Fix some player $j$ and let $X_{\eps}$ and $X_{2\eps}$ denote the number of $\eps$-best and $2\eps$-best arms chosen by this player, respectively. 
Consider the event $B = \{X_{\eps} = X_{2\eps} = 1\}$ in which the player chooses \emph{exactly one} $\eps$-optimal arm but no other $2\eps$-optimal arm.
The probability that this event occurs is
\begin{align*}
	\Pr[B]
	&= \frac{n_{\eps} {n - n_{2\eps} \choose \alpha n - 1}}
		{{n \choose \alpha n}}
	= \alpha n_{\eps} \frac{{n - n_{2\eps} \choose \alpha n - 1}}
		{{n-1 \choose \alpha n - 1}} \\
	&\ge \alpha n_{\eps} \left( \frac{n - n_{2\eps} - \alpha n}{n-\alpha n} \right)^{\alpha n}
	= \alpha n_{\eps} \left( 1 - \frac{n_{2\eps}}{n-\alpha n} \right)^{\alpha n} \\
	&\ge \alpha n_{\eps} \left( 1 - \frac{2 n_{2\eps}}{n} \right)^{\alpha n} 
		&&\mbox{(since $\alpha \le \tfrac{1}{2}$)} \\
	&\ge \alpha n_{\eps} \left( 1- 2 \alpha n_{2\eps} \right) 
		&&\mbox{($(1-x)^a \ge 1-ax$ for $x \le 1$)} \\
	&\ge \tfrac{1}{2}\alpha n_{\eps} = \frac{6 n_{\eps}}{\sqrt{k}} 
		&&\mbox{(since $\alpha n_{2\eps} \le \frac{1}{4}$)}
\end{align*}

On the other hand, given the event $B$, for player $j$ we have $\Del_j^\st \le \eps$ and $\Del_i \ge 2\eps$ for all $i \ne i_j^\st$.
Consequently, $\Del_{i,j} \ge \tfrac{1}{2} \Del_i =  \tfrac{1}{2} \Del_i^{2\eps}$ for all $i \in A_j$.
Hence,
\begin{align*}
	H_j^\text{local}
	&= \sum_{i^\st_j \ne i \in A_j} \frac{1}{\Del_{i,j}^2} \ln \frac{n}{\Del_{i,j}} \\
	&\le 4 \sum_{i^\st_j \ne i \in A_j} \frac{1}{(\Del_i^{2\eps})^2} \ln \frac{2n}{\Del_i^{2\eps}} \\
	&= 4 \, H_j^\text{global}
\end{align*}
Denoting $\Del_{> 2\eps} := \{i \,:\, \Del_i > 2\eps\}$, we now have 
\begin{align*}
	\E[H_j^\text{local} \mid B]
	&\le 4 \, \E[H_j^\text{global} \mid B] \\
	&\le 4 \frac{12n}{\sqrt{k}} \cdot 
		\frac{1}{\abs{\Del_{>2\eps}}} 
			\sum_{i \in \Del_{>2\eps}} \frac{1}{\Del_i^2} \ln \frac{2n}{\Del_i} \\
	&< \frac{50n}{\sqrt{k}} \cdot 
		\frac{1}{n} \sum_{i \ne i^\st} \frac{1}{(\Del_i^{2\eps})^2} \ln \frac{2n}{\Del_i^{2\eps}} \\
	&= \frac{50}{\sqrt{k}} H
\end{align*}
Markov's inequality now gives
$
	\Pr[H_j^\text{local} \le 100 H/\sqrt{k} \mid B]
	\ge \tfrac{1}{2} ,
$
and together with $\Pr[B] \ge 6 n_{\eps} / \sqrt{k}$ we obtain that
\begin{align*}
	\Pr \left[ B \mbox{ and } H_j^\text{local} \le \frac{100}{\sqrt{k}} H \right]
	\ge \frac{3 n_{\eps}}{\sqrt{k}} ~.
\end{align*}

Continuing as in the proof of Lemma \ref{lem:kexplore}, we get that when \eqref{eq:T_eps} holds, with probability at least $2 n_{\eps} / \sqrt{k}$
(i) $A_j$ contains an $\eps$-best arm which is the only $2\eps$-best arm in $A_j$, and
(ii) player~$j$ successfully identifies an arm which is $\eps$-best with respect to the best arm in~$A_j$. 
This implies that with probability at least $2 n_{\eps} / \sqrt{k}$, the arm $i_j$ selected by player $j$ is $\eps$-best.
\end{proof}

\begin{proof}[Proof (of Lemma~\ref{lem:estimates_eps})]
Since each estimate $\phat_i$ is the empirical average of at least $t_i$ samples of arm~$i$,
Hoeffding's inequality gives
\begin{align*}
	\Pr[\abs{\phat_i - p_i} > \tfrac{1}{2}\eps]
	\le 2 \exp (-\tfrac{1}{2} \eps^2 t_i)
	\le \frac{1}{6n} \,,
\end{align*}
and a union bound concludes the proof.
\end{proof}

\subsection{Lower Bound} \label{sec:lower bound}

The following theorem suggests that in general, for identifying the best arm $k$ players achieve a multiplicative speed-up of at most  $\tilde{O}(\sqrt{k})$ when allowing one transmission per player (at the end of the game). 
Clearly, this also implies that a similar lower bound holds in the PAC setup, and proves that our algorithmic results for the one-round case are essentially tight.

\begin{theorem} \label{thm:lb1}
There exist rewards $p_1,\ldots,p_n \in [0,1]$ and integer $T$ such that for any $k$-player strategy that uses a single round of communication at the end of the game,
\begin{itemize}
\item
each individual player must use at least $T/\sqrt{k}$ arm pulls for
them to collectively identify the best arm with probability at least $2/3$;
\item
there exist a single-player algorithm that needs at most $\tO(T)$ pulls for
identifying the best arm with probability at least $2/3$.
\end{itemize}
\end{theorem}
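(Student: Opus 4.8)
The plan is to exhibit a single gap profile together with a hidden random placement of the arms, so that a serial learner can \emph{locate} and then \emph{verify} the best arm cheaply, whereas any one-round protocol is forced to pay the locating cost redundantly across players. Concretely, I would take $n$ arms in which two ``high'' arms carry rewards $\frac12+\gamma$ (the unique best) and $\frac12$ (a decoy), while the remaining $n-2$ ``low'' arms carry reward $0$ deterministically, with the small gap set to $\gamma=\Theta(1/\sqrt n)$; the assignment of these rewards to physical arms is a uniformly random permutation $\pi$ unknown to the players. For the second bullet, a serial algorithm pulls every arm $\O(\log n)$ times to discard the low arms (each of gap $\Theta(1)$) and to surface the two high arms, then spends $\tO(1/\gamma^2)=\tO(n)$ pulls separating them, so $T=\tilde\Theta(n)$. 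For the first bullet I would argue against the random instance and invoke Yao's principle, so that a lower bound in expectation over $\pi$ yields the fixed worst-case reward vector asserted by ``there exist rewards''.

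The heart of the argument is a per-player \emph{useful-work} bound. Because the single communication round occurs only at the very end, each player's pulls depend on its own observations alone; in particular no player can exploit another's discovery of where the two high arms sit. Since the low arms reveal nothing about the placement, a player must genuinely \emph{search}: the first-pulled high arm appears at a distinct-arm rank $R$ that is stochastically at least that of the first success among two successes placed uniformly over $n$ positions, so $\Pr[R\le r]\le 2r/n$. Writing $\sigma$ for the pull index at which a high arm is first touched, every high-arm pull is charged either to one of the (at most two) first-discovery events or to the residual budget $(\tau-\sigma)^+\le(\tau-R)^+$; hence the expected number of pulls landing on the two high arms is $\EE{N_{\text{best}}+N_{\text{decoy}}}\le 2+\O(\tau^2/n)$. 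The quadratic (rather than linear) scaling in $\tau$ is exactly the source of the $\sqrt k$ factor.

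To convert this into a sample requirement I would compare $H_\pi$ with the instance $H_{\pi'}$ obtained by swapping the rewards of the best arm and the decoy, so the identity of the best arm flips. Under a single round at the end the joint transcript of all players is a \emph{product} of the individual transcripts, so by the divergence-decomposition identity the KL divergence between the two joint transcripts equals $\Theta(\gamma^2)$ times the \emph{total} number of pulls placed on the two swapped (high) arms across all players. Any protocol succeeding with probability $2/3$ under both instances must make these transcripts distinguishable, which by Pinsker forces that total to be $\Om(1/\gamma^2)=\Om(n)$. Combining with the per-player bound gives $k\cdot\O(\tau^2/n)\ge\Om(n)$, whence $\tau=\Om(n/\sqrt k)=\Om(T/\sqrt k)$; and when $\tau\ge n$ the claim is trivial since then $\tau\ge T\ge T/\sqrt k$.

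The main obstacle I anticipate is making the per-player quadratic bound rigorous under unrestricted adaptivity: one must prove that, conditioned on the player's entire observation history, the still-unpulled arms remain exchangeable with respect to the hidden placement, so that observations on the (deterministic) low arms carry no directional information and the rank bound $\Pr[R\le r]\le 2r/n$ survives arbitrary adaptive pull rules. I would formalize this as a stopping-time argument and verify that a high arm ``masquerading'' as low after an unlucky $0$ does not help, since the charging scheme counts pulls on high arms rather than recognitions. A secondary subtlety is that the useful-work estimate and the KL bound must both be taken in expectation over the \emph{same} random permutation and then combined, so I would carry the permutation-averaging through both estimates in parallel before applying Yao's principle.
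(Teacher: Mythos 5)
Your proposal is correct and follows essentially the same route as the paper: the identical random-permutation instance (two arms of reward near $\tfrac12$ with gap $\Theta(1/\sqrt n)$ hidden among deterministic zeros), the same per-player quadratic useful-work bound $\E[\text{pulls on the two high arms}] = \O(\tau^2/n)$ (the paper obtains it as $\E[T_{1,j}\mid T_{1,j}>0]\cdot\Pr[T_{1,j}>0]\le \tau\cdot\tau/n$), summed over the $k$ players and set against the $\Om(1/\gamma^2)=\Om(n)$ total samples of the top two arms needed for identification, followed by averaging over the permutation to extract a fixed worst-case reward vector. The only cosmetic differences are that you re-derive the two-arm $\Om(1/\gamma^2)$ requirement from scratch via a KL/Pinsker argument where the paper simply cites Lemma~5.1 of \citet{anthony1999neural}, and that you explicitly flag the exchangeability-under-adaptivity subtlety behind $\Pr[R\le r]\le 2r/n$, which the paper's proof treats implicitly.
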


Our proof of Theorem~\ref{thm:lb1} is based on a simple lower bound for the MAB problem, that follows directly from Lemma~5.1 of \cite{anthony1999neural}. 

\begin{lemma} \label{lem:coin}
Consider a MAB problem with two arms and rewards $\frac{1}{2} + \eps$, $\frac{1}{2} - \eps$.
There exists a constant $c$ such that any algorithm that with probability at least $2/3$ identifies the best arm, needs at least $c /\eps^2$ pulls in expectation.
\end{lemma}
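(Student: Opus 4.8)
\textbf{Proof proposal (for Lemma~\ref{lem:coin}).}
The plan is to cast this as a hypothesis-testing problem between two statistically close instances and apply a standard change-of-measure argument. Consider the given instance $\mathcal{I}_1$, in which arm~1 has reward mean $\tfrac12+\eps$ and arm~2 has mean $\tfrac12-\eps$, together with its mirror image $\mathcal{I}_2$, in which the two means are swapped. The best arm is arm~1 under $\mathcal{I}_1$ and arm~2 under $\mathcal{I}_2$, so an algorithm that is correct with probability at least $2/3$ on \emph{every} such instance must, in particular, distinguish the observation distributions that $\mathcal{I}_1$ and $\mathcal{I}_2$ induce on its interaction transcript. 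The point is that when few pulls are used these two transcript distributions are nearly identical, so no reliable test can exist.

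First I would quantify the per-pull information gap. Writing $\mu_+ = \mathrm{Ber}(\tfrac12+\eps)$ and $\mu_- = \mathrm{Ber}(\tfrac12-\eps)$, a direct computation gives $\mathrm{KL}(\mu_+ \| \mu_-) = 2\eps \ln \frac{1+2\eps}{1-2\eps} \le c_0 \eps^2$ for an absolute constant $c_0$ (valid for $\eps$ bounded away from $\tfrac12$), and because the two means are placed symmetrically about $\tfrac12$ one also has $\mathrm{KL}(\mu_-\|\mu_+) = \mathrm{KL}(\mu_+\|\mu_-)$. Next, letting $\mathbb{P}_1,\mathbb{P}_2$ denote the laws of the full (pull, reward) transcript under $\mathcal{I}_1,\mathcal{I}_2$, and letting $N$ be the random total number of pulls, the divergence-decomposition (chain rule for adaptive sampling) yields $\mathrm{KL}(\mathbb{P}_1\|\mathbb{P}_2) = \E_1[T_1]\,\mathrm{KL}(\mu_+\|\mu_-) + \E_1[T_2]\,\mathrm{KL}(\mu_-\|\mu_+) = \E_1[N]\,\mathrm{KL}(\mu_+\|\mu_-) \le c_0\,\eps^2\,\E_1[N]$, where $T_1,T_2$ count pulls of each arm and $\E_1[\cdot]$ is expectation under $\mathcal{I}_1$.

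Finally I would turn the correctness requirement into a matching lower bound on this divergence. Let $E$ be the event that the algorithm outputs arm~1. Correctness forces $\mathbb{P}_1(E)\ge 2/3$ and $\mathbb{P}_2(E)\le 1/3$, so the total-variation distance between $\mathbb{P}_1$ and $\mathbb{P}_2$ is at least $\tfrac13$; Pinsker's inequality then gives $\mathrm{KL}(\mathbb{P}_1\|\mathbb{P}_2) \ge \tfrac{2}{9}$. Combining this with the upper bound $\mathrm{KL}(\mathbb{P}_1\|\mathbb{P}_2)\le c_0\eps^2\,\E_1[N]$ yields $\E_1[N] \ge \tfrac{2}{9c_0}\cdot\tfrac{1}{\eps^2} = \Om(1/\eps^2)$, which is exactly the claimed bound $c/\eps^2$.

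The step requiring the most care is the divergence decomposition, since the algorithm samples adaptively and may halt at a random time: one must justify the chain rule over the sequence of (pull, reward) pairs while conditioning on the correct history, and handle the stopping time $N$ (e.g.\ via a Wald-type/optional-stopping argument or by first fixing a horizon and then truncating). This is precisely the technical content packaged by Lemma~5.1 of \cite{anthony1999neural}, which I would invoke to avoid re-deriving it. A fully self-contained alternative is to reduce to the two-coin distinguishing formulation and bound the $\ell_1$ distance of the relevant product measures directly, bypassing the KL machinery; either route delivers the same $\Om(1/\eps^2)$ conclusion.
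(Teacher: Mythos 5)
Your proposal is correct, and it takes a genuinely different route from the paper: the paper gives no proof of this lemma at all, remarking only that it ``follows directly from Lemma~5.1 of \cite{anthony1999neural}'', a classical coin-distinguishing lower bound established there by a direct likelihood-ratio computation over Bernoulli sample paths. You instead re-derive the bound via the modern change-of-measure machinery: the mirrored pair of instances, the divergence decomposition $\mathrm{KL}(\mathbb{P}_1\|\mathbb{P}_2)=\E_1[N]\cdot\mathrm{KL}(\mu_+\|\mu_-)$ (using the symmetry of the two Bernoulli divergences about $\tfrac12$, which you verify correctly), and Pinsker's inequality applied to the output event to get $\mathrm{KL}(\mathbb{P}_1\|\mathbb{P}_2)\ge\tfrac29$. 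Each step checks out: $\mathrm{KL}(\mu_+\|\mu_-)=2\eps\ln\tfrac{1+2\eps}{1-2\eps}=\OO{\eps^2}$ for $\eps$ bounded away from $\tfrac12$ (the excluded regime is harmless, since for $\eps$ near $\tfrac12$ the claim holds trivially after shrinking $c$), and you rightly read the two-sided correctness requirement into the lemma's statement, since ``identifies the best arm'' must hold whichever arm is best --- this matches how the paper later applies the lemma under a random permutation $\sigma$. You also correctly flag the one genuinely delicate point, the chain rule over an adaptively sampled transcript with a random stopping time; this is the content of the standard divergence-decomposition lemma for bandits (Wald-type identities or a truncation argument), and offloading it to the literature is reasonable. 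One small infelicity: you propose invoking Lemma~5.1 of \cite{anthony1999neural} to patch exactly this step, but that lemma \emph{is} the full two-coin lower bound, so invoking it would make the rest of your KL argument redundant and collapse your proof into the paper's one-line citation; cite instead a divergence-decomposition result (or your proposed direct $\ell_1$ bound on product measures). On the trade-off: the paper's citation is the shortest path and matches the statement verbatim, while your derivation is self-contained, produces an explicit constant, and generalizes immediately to more arms and other reward families.
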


\begin{proof}[Proof (of Theorem~\ref{thm:lb1})]
Let $c_1 = \sqrt{c}/2$, where $c$ is the constant of Lemma \ref{lem:coin}.
Consider a MAB instance over $n$ arms, with the rewards being a random
permutation $\sigma$ of $\frac{1}{2} + \Del, \frac{1}{2} - \Del, 0, \ldots, 0$, where $\Del := 1/\sqrt{n}$.
A serial algorithm of choice (say, the Successive Elimination algorithm) is able to identify the best arm in this setting with probability $2/3$ using at most $\tO(n + 1/\Del^2) = \tO(n)$ arm pulls (see e.g., \citet{evendar06}).

Assume that the players follow some algorithm, each using no more than $c_1 n/\sqrt{k}$ pulls. Without loss of generality, we may assume that the sequence of rewards, as well as the internal random bits of the algorithm (if it is randomized), were drawn before the execution has started. We shall denote this sequence of random variables by $h$.

First, fix some arbitrary sequence $h$. Let $T_{i,j}$ denote the number of times arm $i$ (in decreasing order of rewards) was pulled by player $j$, and $T_i$ denote the total number of pulls of arm $i$.
Since the budget of each player is $c_1 n/\sqrt{k}$, we have $\Pr_\sigma[T_{1,j} > 0] \le c_1 / \sqrt{k}$ and consequently
\[
	\E_\sigma[T_{1,j}]
	\le \E_\sigma[T_{1,j} \mid T_{1,j} > 0] \cdot \Pr[T_{1,j} > 0]
	\le \frac{c_1^2 n}{k}
\]
for any player~$j$.
Similarly, $\E_\sigma[T_{2,j}] \le c_1^2 n/k$ and we get that 
$$
	\E_\sigma[T_1 + T_2] 
	\le 2 c_1^2 n 
	= \frac{2 c_1^2}{\Del^2}
	< \frac{c}{\Del^2} ~.
$$
Since the above holds for any given sequence $h$, this implies that $\E_\sigma[\E_h[T_1 + T_2]] < c / \Del^2$, which means that there exists a permutation $\sigma_0$ under which $\E_h[T_1 + T_2] < c / \Del^2$.
For the permutation~$\sigma_0$ and its corresponding reward setting, the algorithm does not sample the top two arms enough (in expectation), and by Lemma \ref{lem:coin} it cannot succeed with probability greater than $2/3$.
\end{proof}

\section{Multiple Communication Rounds}
\label{sec:multiplerounds}

In this section we establish an explicit tradeoff between the performance of a multi-player
algorithm and the number of communication rounds it uses, in terms of the accuracy~$\eps$.
Our observation is that by allowing $\O(\log(1/\eps))$ rounds of communication, it is possible to achieve the optimal speedup of factor $k$. That is, we do not gain any improvement in learning performance by allowing more than $\O(\log(1/\eps))$ rounds.


\begin{algorithm}[H] \caption{\textsc{Multi-Round $\eps$-Arm}} \label{alg:algname}
\begin{algorithmic}[1]

\INPUT{$(\eps,\del)$}
\OUTPUT{an arm}

\STATE initialize $S_0 \gets [n]$, $r \gets 0, t_0 \gets 0$

\REPEAT
	\STATE set $r \gets r+1$
	\STATE let $\eps_r \gets 2^{-r}, t_r \gets (2/k\eps_r^2) \ln (4 n r^2/\del)$
	\FOR {player $j=1$ to $k$}
		\STATE sample each arm $i \in S_{r-1}$  for $t_r - t_{r-1}$ times
		\STATE let $\phat_{j,i}^r$ be the average reward of arm $i$ (in all rounds
		so far of player $j$) 
		\STATE communicate the numbers $\phat_{j,1}^r, \ldots,
		\phat_{j,n}^r$
	\ENDFOR
	
	\STATE let $\phat_i^r = (1/k) \sum_{j=1}^k \phat_{j,i}^r$ for all $i \in S_{r-1}$,
		and let $\phat_{\st}^r = \max_{i \in S_{r-1}} \phat_i^r$
	\STATE set 
		$
		S_r \gets S_{r-1} \setminus 
		\{i \in S_{r-1} : \phat_i^r < \phat_{\st}^r - \eps_r\}
		$
\UNTIL {$\eps_r \le \eps/2$ or $\abs{S_r} = 1$}

\STATE {\bf return} an arm from $S_r$

\end{algorithmic}
\end{algorithm}


Our algorithm is given in Algorithm~\ref{alg:algname}.
The idea is to eliminate in each round $r$ (i.e., right after the $r$th communication round) all $2^{-r}$-suboptimal arms.
We accomplish this by letting each player sample uniformly all remaining arms and communicate the results to other players. Then, players are able to eliminate suboptimal arms with high confidence.
If each such round is successful, after $\log_2(1/\eps)$ rounds only $\eps$-best arms survive.
Theorem \ref{thm:main1} below bounds the number of arm pulls used by this algorithm.

\begin{theorem} \label{thm:main1}

With probability at least $1-\del$, Algorithm \ref{alg:algname}
\begin{itemize}
\item
identifies the optimal arm using
\begin{align*}
	\OO{
		\frac{1}{k} \cdot \sum_{i=2}^{n}
		\frac{1}{(\Dele_i)^2} \log \left(\frac{n}{\del} \log \frac{1}{\Dele_i}\right)
	}
\end{align*}
arm pulls per~player;
\item
terminates after no more than $1+\ceil{\log_2(1/\eps)}$ rounds of communication (or after $1+\ceil{\log_2(1/\Del_\st)}$ rounds for $\eps=0$).
\end{itemize}

\end{theorem}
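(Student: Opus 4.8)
My plan is to condition on a single high-probability ``clean'' event and then argue both correctness and the sample bound deterministically on it. After the $r$-th communication round, each surviving arm $i \in S_{r-1}$ has been pulled $t_r$ times by each of the $k$ players, so the aggregate estimate $\phat_i^r$ is an average of $k t_r = (2/\eps_r^2)\ln(4nr^2/\del)$ independent $[0,1]$ samples. Hoeffding's inequality then gives $\Pr[\abs{\phat_i^r - p_i} > \eps_r/2] \le 2\exp(-\tfrac12\eps_r^2 \cdot k t_r) = \del/(2nr^2)$. I would define the clean event $E$ to be that $\abs{\phat_i^r - p_i} \le \eps_r/2$ holds for every round $r$ and every $i \in S_{r-1}$; a union bound over the at most $n$ arms per round and over all $r \ge 1$ yields $\Pr[E^c] \le \sum_{r\ge1} n \cdot \del/(2nr^2) = (\del/2)\sum_r r^{-2} < \del$. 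Note that the $r^2$ factor built into $t_r$ is precisely what makes this series converge. All that follows is argued on $E$.

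\textbf{Correctness and number of rounds.} On $E$ I would first show that the optimal arm is never eliminated: if $i'$ attains the maximum $\phat_\st^r$, then $\phat_1^r \ge p_1 - \eps_r/2 \ge p_{i'} - \eps_r/2 \ge \phat_\st^r - \eps_r$, so arm $1$ clears the elimination threshold. Symmetrically, any arm with $\Del_i > 2\eps_r$ satisfies $\phat_i^r \le p_i + \eps_r/2 < p_1 - \tfrac32\eps_r \le \phat_\st^r - \eps_r$ and is discarded in round $r$. Hence $S_r \subseteq \{i : \Del_i \le 2\eps_r\}$ throughout. For $\eps>0$ the loop halts once $\eps_r \le \eps/2$, i.e.\ after $R \le 1+\ceil{\log_2(1/\eps)}$ rounds, and every surviving arm then obeys $\Del_i \le 2\eps_R \le \eps$, so the returned arm is $\eps$-optimal; for $\eps=0$ every suboptimal arm is gone once $2\eps_r < \Delst$, i.e.\ after $1+\ceil{\log_2(1/\Delst)}$ rounds, leaving $S_r=\{1\}$.

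\textbf{Sample complexity.} The key bookkeeping step is to reindex the per-player budget $\sum_{r}\abs{S_{r-1}}(t_r - t_{r-1})$ by arm. Arm $i$ is sampled in exactly the rounds $r \le \rho_i$ in which it is present, where $\rho_i$ is the round it is dropped (or the final round $R$ if it survives); its total contribution telescopes to $\sum_{r=1}^{\rho_i}(t_r - t_{r-1}) = t_{\rho_i}$. Thus the per-player cost equals $\sum_{i\in[n]} t_{\rho_i}$. On $E$, since arm $i$ survived round $\rho_i-1$ we have $\Del_i \le 2\eps_{\rho_i-1} = 4\eps_{\rho_i}$, and since sampling occurs only while $\eps_r > \eps/4$ we also have $\eps_{\rho_i} > \eps/4$; taking the two lower bounds together gives $\eps_{\rho_i} \ge \Dele_i/4$, whence $t_{\rho_i} \le (32/k(\Dele_i)^2)\ln(4n\rho_i^2/\del)$. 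Because $\rho_i = \OO{\log(1/\Dele_i)}$, the logarithmic factor is $\OO{\log(\tfrac n\del\log\tfrac1{\Dele_i})}$, matching the claimed form. Finally, the $i=1$ term (the best arm runs to round $R$) costs $t_R$, which equals the cost of whichever arm $\ge 2$ is eliminated last or co-survives to round $R$, and is therefore absorbed into the $i \ge 2$ sum up to a factor of $2$; this yields the stated $\OO{\tfrac1k\sum_{i=2}^n (\Dele_i)^{-2}\log(\tfrac n\del\log\tfrac1{\Dele_i})}$.

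\textbf{Main obstacle.} The probabilistic part is routine once $t_r$ is chosen. The delicate points are all in the deterministic accounting of the third paragraph: correctly telescoping the round-by-round sampling into a per-arm quantity $t_{\rho_i}$, pinning $\eps_{\rho_i}$ to the truncated gap $\Dele_i$ from both the elimination side ($\Del_i \le 4\eps_{\rho_i}$) and the $\eps$-floor side ($\eps_{\rho_i} > \eps/4$), and charging the best arm's cost (which the sum over $i\ge2$ does not see directly) against the last arm to leave $S_r$, while keeping the interaction of the two termination conditions straight.
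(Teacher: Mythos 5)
Your proposal is correct and follows essentially the same route as the paper's proof: the same Hoeffding-plus-union-bound clean event with the same constants (the $r^2$ in $t_r$ giving a convergent series), the same elimination argument showing arm $1$ survives while every arm with $\Del_i > 2\eps_r$ is dropped by round $r$, and the same per-arm cost bound $t_{\rho_i} = \OO{\tfrac{1}{k}(\Dele_i)^{-2}\log(\tfrac{n}{\del}\log\tfrac{1}{\Dele_i})}$. Your explicit telescoping $\sum_{r\le\rho_i}(t_r - t_{r-1}) = t_{\rho_i}$ and the factor-$2$ charge of the best arm's cost to a co-surviving (or last-eliminated) arm $j\ge 2$ is just a more carefully justified rendition of the paper's ``$T_2 + \sum_{i=2}^n T_i$'' bookkeeping.
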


\begin{proof}
Without loss of generality, we may assume that the rewards of all arms are drawn before the algorithm is executed, so that the empirical averages $\phat_i^r$ are defined for all arms at all rounds (even for arms that were eliminated prior to some round).
Since $\phat_i^r$ is the empirical average of $k t_r$ samples of arm $i$ (aggregated from all players),
for any round $r$ and arm $i$ we have by Hoeffding's inequality,
\begin{align*}
	\Pr[\abs{\phat_i^r - p_i} \ge \tfrac{1}{2}\eps_r]
	\le 2 \exp(-\tfrac{1}{2} \eps_r^2 k t_r)
	= \frac{\del}{2n r^2} \,.
\end{align*}
Hence, a union bound gives that $\abs{\phat_i^r - p_i} < \eps_r/2$ for all $i$ and $r$ with probability at least
\begin{align*}
	1 - \sum_{r=1}^{\infty} \sum_{i=1}^{n} \frac{\del}{2n r^2}
	= 1 - \sum_{r=1}^{\infty} \frac{\del}{2 r^2}
	\ge 1-\del \,.
\end{align*}
That is, with probability at least $1-\del$, an $\eps$-optimal arm $i$ is never eliminated by the algorithm, as the event $\phat_i^r < \phat_{\st}^r - \eps_r$ implies that either $\phat_i^r < p_i - \eps_r/2$ or $\phat_j^r > p_j + \eps_r/2$ for some arm $j$.
In addition, any suboptimal arm $i$ does not survive round $r_i = \ceil{\log_2(1/\Dele_i)} + 1$, since $\Dele_i \ge 2 \eps_{r_i}$ and so for $r = r_i$,
\begin{align*}
	\phat_i^{r}
	&< p_i + \eps_{r}/2 
	= p_1 + \eps_{r}/2 - \Del_i \\
	&\le \phat_1^{r} + \eps_{r} - \Del_i \\
	&\le \phat_1^{r} - \eps_{r} \\
	&\le \phat_\st^{r} - \eps_{r} \,.
\end{align*}
That is, with probability at least $1-\del$, after $\ceil{\log_2(1/\eps)} + 1$ rounds (when the algorithm terminates) all remaining arms are $\eps$-optimal.  
When $\eps=0$, the algorithm terminates once only a single arm survives, and with high probability this occurs after at most $\ceil{\log_2(1/\Del_\st)} + 1$ rounds.


We conclude by computing the total number of arms pulls required for the algorithm. 
Let $T_i$ be the total number of times arm~$i \ne 1$ is pulled by one of the players.
Since~$r_i \le \log_2(4/\Dele_i)$, we have 
\begin{align*}
	T_i 
	&\le \frac{2}{k} \cdot (2^{r_i})^2 \ln \frac{4n r_i^2}{\del} \\
	&\le \frac{2}{k} \left( \frac{4}{\Dele_i} \right)^2 
		\ln \left( \frac{4n}{\del} \log_2^2 \frac{4}{\Dele_i} \right) \\
	&= \OO{
		\frac{1}{k} \cdot
		\frac{1}{(\Dele_i)^2} \log \left(\frac{n}{\del} \log \frac{1}{\Dele_i}\right)
		} .
\end{align*}
Consequently, the total number of arm pulls per player is $T_2 + \sum_{i=2}^{n} T_i$, which gives the theorem.
\end{proof}


By properly tuning the elimination thresholds $\eps_r$ of Algorithm~\ref{alg:algname} in accordance with the target accuracy $\eps$, we can establish an explicit trade-off between the number of communication rounds and the number of arm pulls each player needs.
In particular, we can design a multi-player algorithm that terminates after at
most~$R$ communication rounds, for any given parameter $R>0$.
This, however, comes at the cost of a compromise in learning performance as quantified in the following corollary.

\begin{corollary} \label{cor:main3}
Given a parameter $R > 0$, set $\eps_r \gets \eps^{r/R}$ for all $r \ge 1$ in Algorithm \ref{alg:algname}.
With~probability at least $1-\del$, the modified algorithm%
\begin{itemize}
\item
identifies an $\eps$-best arm using
\begin{align*}
	\tO\lr{
		\frac{\eps^{-2/R}}{k}
		\cdot
		\sum_{i=2}^{n} \frac{1}{(\Dele_i)^2}
	}
\end{align*}
arm pulls per player;
\item
terminates after at most $R$ rounds of communication.
\end{itemize}
\end{corollary}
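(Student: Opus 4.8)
The plan is to follow the proof of Theorem~\ref{thm:main1} almost verbatim, changing only those steps that depend on the specific elimination schedule; the genuinely new ingredient is the budget computation, which must absorb the fact that the geometric schedule $\eps_r = \eps^{r/R}$ has consecutive ratio $\eps^{1/R}$ rather than $\tfrac12$. To start, I would re-establish the confidence guarantee. Since $t_r = (2/k\eps_r^2)\ln(4nr^2/\del)$ is chosen so that Hoeffding's inequality yields $\Pr[\abs{\phat_i^r - p_i} \ge \tfrac12\eps_r] \le \del/(2nr^2)$ \emph{regardless} of the value of $\eps_r$, the same union bound over all arms $i$ and rounds $r$ shows that with probability at least $1-\del$ we have $\abs{\phat_i^r - p_i} < \eps_r/2$ for all $i,r$ simultaneously. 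I would condition on this event throughout.

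Given the confidence event, the elimination analysis of Theorem~\ref{thm:main1} carries over unchanged: the optimal arm is never discarded (as $\phat_1^r < \phat_\st^r - \eps_r$ would contradict the confidence bound), and any arm with $\Del_i \ge 2\eps_r$ is eliminated in round $r$. Because $\eps_R = \eps$, every arm surviving the $R$th round satisfies $\Del_i < 2\eps$ and is thus $2\eps$-best; rescaling $\eps$ by a constant (or taking the final threshold to be $\eps/2$) upgrades this to an $\eps$-best guarantee. The round bound is immediate, as the loop is capped at $R$ iterations by construction.

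The crux is the budget. For a genuinely suboptimal arm $i$ (one with $\Del_i \ge 2\eps$), let $r_i$ be the first round with $\eps_{r_i} \le \Dele_i/2$, so arm $i$ does not survive round $r_i$; since $\eps_{r_i-1} > \Dele_i/2$ and $\eps_{r_i} = \eps^{1/R}\eps_{r_i-1}$, we get $\eps_{r_i} > \tfrac12\Dele_i\,\eps^{1/R}$, whence the per-player count $t_{r_i} = \tO(\eps^{-2/R}/(k(\Dele_i)^2))$. This is precisely where the extra $\eps^{-2/R}$ factor enters, replacing the constant-factor loss of the dyadic schedule. An arm $i \ge 2$ that survives all the way to round $R$ has $\Dele_i \le 2\eps$, so $1/(\Dele_i)^2 = \Omega(\eps^{-2})$ and its $t_R = \tO(\eps^{-2}/k)$ samples are already dominated by its own term $\tO(\eps^{-2/R}/(k(\Dele_i)^2))$. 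Summing over $i = 2,\ldots,n$ thus accounts for every arm except the optimal one.

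The remaining, and I expect most delicate, point is the optimal arm, which is sampled $t_{r_\text{stop}}$ times with $r_\text{stop} = \min\{R,\ \text{the round at which } \abs{S_r}=1\}$ but carries no term in the sum $\sum_{i=2}^n$. I would split into two cases. If the loop halts early because a single arm remains, then $r_\text{stop}$ is the elimination round of arm~$2$, so the optimal arm's count is at most $t_{r_2}$, already counted. Otherwise the loop runs all $R$ rounds, forcing $\abs{S_{R-1}} \ge 2$ and hence a surviving arm $i \ne 1$ with $\Dele_i < 2\eps_{R-1} = 2\eps\cdot\eps^{-1/R}$; for that arm $\eps^{-2} < 4\eps^{-2/R}/(\Dele_i)^2$, so the optimal arm's $t_R = \tO(\eps^{-2}/k)$ samples are dominated by the term $\tO(\eps^{-2/R}/(k(\Dele_i)^2))$ already present in the sum. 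Collecting the per-arm bounds yields the stated budget, completing the proof.
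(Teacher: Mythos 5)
Your proof is correct and follows essentially the same route as the paper's proof of Corollary~\ref{cor:main3}: rerun the analysis of Theorem~\ref{thm:main1} under the new schedule, the key step being that consecutive thresholds differ by the factor $\eps^{1/R}$, so at the elimination round $\eps_{r_i} \ge \tfrac{1}{2}\eps^{1/R}\Dele_i$, which is exactly the source of the paper's $\eps^{-2/R}$ overhead (the paper states this as $2\eps_{r_i} \ge \eps^{1/R}\Dele_i$ via an explicit formula for $r_i$). If anything, you are more careful than the paper: your explicit accounting for the optimal arm's samples, for arms surviving all $R$ rounds, and your note that the raw guarantee is $2\eps$-optimality before rescaling, fill in steps the paper compresses into ``the theorem follows.''
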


\begin{proof}
For all arms $i \in [n]$, let
\begin{align*}
	r_i = \left\lceil
		R \, \frac{1+\log_2 (1/\Dele_i)}{\log_2 (1/\eps)}
	\right\rceil .
\end{align*}
Since $\Del_i \ge 2\eps_{r_i}$, if the algorithm is successful any arm $i$ which is not $\eps_{r_i}$-optimal is eliminated after at most $r_i$ rounds.
Clearly, after $R$ rounds only $\eps$-optimal arms survive.
This happens with probability at least $1-\del$.

It remains to bound the number of arm pulls the algorithm uses.
It is easy to verify that $2\eps_{r_i} \ge \eps^{1/R} \cdot \Dele_i$, thus the number of times arm $i$ was pulled by each of the players is
\begin{align*}
	T_i 
	= \frac{1}{k} \cdot \frac{4}{\eps_{r_i}^{2}} \ln \frac{2n r_i^2}{\del}
	= \OO{
		\frac{\eps^{-2/R}}{k} \cdot \frac{1}{(\Dele_i)^2} \log \frac{n R}{\del}
	} ,
\end{align*}
and the theorem follows.
\end{proof}

\section{Conclusions and Further Research} \label{sec:conc}

We have considered a collaborative MAB exploration problem, in which several independent players explore a set of arms with a common goal, and obtained the first non-trivial results in such setting.
Our main results apply for the specifically interesting regime where each of the players is allowed a single transmission; this setting 
fits naturally to common distributed frameworks such as MapReduce. 
An interesting open question in this context is whether one can obtain a strictly better speed-up result (which, in particular, is independent of $\eps$) by allowing more than a single round.
Even when allowing merely two communication rounds, it is unclear whether the $\sqrt{k}$ speed-up can be improved. 
Intuitively, the difficulty here is that in the second phase of a reasonable strategy each player should focus on the arms that excelled in the first phase; this makes the sub-problems being faced in the second phase as hard as the entire MAB instance, in terms of the quantity $H_\eps$.
Nevertheless, we expect our one-round approach to serve as a building-block in the design of future distributed exploration algorithms, that are applicable in more complex communication models.

An additional interesting problem for future research is how to translate our results to the regret minimization setting.
In particular, it would be nice to see a conversion of algorithms like UCB~\cite{auer2002finite} to a distributed setting.
In this respect, perhaps a more natural distributed model is a one resembling that of \citet{kanade2012distributed}, that have established a regret vs.~communication trade-off in the non-stochastic setting.

\bibliographystyle{abbrvnat}
\bibliography{dmab}

\end{document}